\def\BibTeX{{\rm B\kern-.05em{\sc i\kern-.025em b}\kern-.08em
    T\kern-.1667em\lower.7ex\hbox{E}\kern-.125emX}}
\newtheorem{theorem}{Theorem}[section]
\newtheorem{definition}{Definition}[section]
\newtheorem{lemma}[theorem]{Lemma}
\newtheorem{proposition}[theorem]{Proposition}
\newcommand{\norm}[1]{\left\lVert#1\right\rVert}
\newcommand{\orcid}[1]{\href{https://orcid.org/#1}{\textcolor[HTML]{A6CE39}{\aiOrcid}}}
\DeclareMathOperator{\Vol}{Vol}
\DeclareMathOperator*{\argmax}{\arg\!\max}
\soulregister{\ref}{7}
\soulregister{\pageref}{7}
\soulregister{\eqref}{7}
\soulregister{\autoref}{7}
\soulregister{\cite}{7}
\soulregister{\url}{7}
\begin{document}
\title{A Mathematical Certification for Positivity Conditions in Neural Networks with Applications to Partial Monotonicity and Trustworthy AI}
\author{
Alejandro Polo-Molina\,\orcidlink{0000-0001-7051-2288},
  David Alfaya\,\orcidlink{0000-0002-4247-1498},
  Jose Portela\,\orcidlink{0000-0002-7839-8982}
\thanks{This research was supported by funding from CDTI, with Grant Number MIG-20221006 associated with the ATMOSPHERE Project and grant PID2022-142024NB-I00 funded by MCIN/AEI/10.13039/501100011033. We would also like to thank the anonymous reviewers for their valuable comments, which have helped to improve the paper.}}

\markboth{}
{A mathematical certification for positivity conditions in Neural Networks with applications to partial monotonicity and Trustworthy AI}

\maketitle
\begingroup\renewcommand\thefootnote{}\footnotetext{%
{\footnotesize © 2025 IEEE. Personal use of this material is permitted. Permission from IEEE must be obtained for all other uses, in any current or future media, including reprinting/republishing this material for advertising or promotional purposes, creating new collective works, for resale or redistribution to servers or lists, or reuse of any copyrighted component of this work in other works.}%
}\addtocounter{footnote}{-1}\endgroup
\begin{abstract}
Artificial Neural Networks (ANNs) have become a powerful tool for modeling complex relationships in large-scale datasets. However, their black-box nature poses trustworthiness challenges. In certain situations, ensuring trust in predictions might require following specific partial monotonicity constraints. However, certifying if an already-trained ANN is partially monotonic is challenging. Therefore, ANNs are often disregarded in some critical applications, such as credit scoring, where partial monotonicity is required. To address this challenge, this paper presents a novel algorithm (LipVor) that certifies if a black-box model, such as an ANN, is positive based on a finite number of evaluations. 
Consequently, since partial monotonicity can be expressed as a positivity condition on partial derivatives, LipVor can certify whether an ANN is partially monotonic.
To do so, for every positively evaluated point, the Lipschitzianity of the black-box model is used to construct a specific neighborhood where the function remains positive. Next, based on the Voronoi diagram of the evaluated points, a sufficient condition is stated to certify if the function is positive in the domain. 
Unlike prior methods, our approach certifies partial monotonicity without constrained architectures or piece-wise linear activations.
Therefore, LipVor could open up the possibility of using unconstrained ANN in some critical fields. Moreover, some other properties of an ANN, such as convexity, can be posed as positivity conditions, and therefore, LipVor could also be applied.
\end{abstract}

\begin{IEEEkeywords}
Artificial Neural Networks, Partial Monotonicity, Mathematical Certification, Trustworthy AI
\end{IEEEkeywords}

\section{Introduction}\label{sec:1}

\IEEEPARstart{A}{rtificial} Neural networks (ANNs) have gained significant attention as a powerful tool for modeling complex non-linear relationships and state-of-the-art performance in many real-world applications \cite{Goodfellow2016DeepLearning}, \cite{Lecun2015DeepLearning}. 
Therefore, ANNs have been an extraordinarily active and promising research field in recent decades.
Its development is justified by the encouraging results obtained in many fields including speech recognition \cite{Hinton2012DeepGroups}, computer vision \cite{Voulodimos2018DeepReview}, financial applications \cite{Liu2017AApplications} and many others \cite{Sarvamangala2022ConvolutionalSurvey,Xu2020ApplicationIntelligence}. 

However, ANNs are considered black-box models as their analytical expression is hardly interpretable, and therefore, they can only be analyzed in terms of the inputs and outputs. Thus, ANNs can pose a significant challenge in fields where interpretability and transparency are often critical considerations  \cite{cohen2021}, \cite{Tjoa2021AXAI}. This need for explainability has caused the field of explainable artificial intelligence (XAI) to grow substantially in recent years \cite{Dosilovic2018ExplainableSurvey}. Consequently, there have been many approaches trying to explain how neural networks are computing their prediction \cite{Zhang2020AInterpretability,Pizarroso2022NeuralSens:Networks,Morala2023NN2Poly:Networks}. 

Nevertheless, explainability alone is insufficient for critical services such as medicine \cite{Tjoa2021AXAI} or credit scoring \cite{Bussmann2021ExplainableManagement}. As the number of training variables grows, ANNs risk capturing spurious or irrelevant patterns, and interpretability alone cannot prevent unfair predictions \cite{Rudin2019StopInstead}. Thus, training ANNs for such applications must ensure not only explainability but also robustness (reliable behavior under input perturbations) and fairness (unbiased predictions that do not disadvantage individuals). These principles underpin \textit{trustworthy AI}, where fairness and reliability are essential for deployment in high-stakes fields \cite{Kaur2021RequirementsReview,Thiebes2021TrustworthyIntelligence}.

One approach for ensuring that the model behaves appropriately is to incorporate prior knowledge from the human expert into the model. One example where leveraging prior expertise can enhance the model's fairness occurs when dealing with partial monotonicity constraints. By applying a partial monotonicity constraint, the model's output function is forced to be partially monotonic. Therefore, if an increasing (resp. decreasing) partial monotonicity constraint is imposed, then the model predictions should increase (resp. decrease) whenever a set of input values increases. 

Besides, in some cases, partial monotonicity is not just a matter of enhancing explainability and robustness but is often a requisite \cite{Xie2023TestingModelling}. Numerous studies have highlighted the critical role of monotonicity in fairness in some areas such as finance, health care, criminology, and education, where deviations from monotonicity may lead to misleading or biased human decisions \mbox{\cite{Liu2020CertifiedNetworks, Sivaraman2020Counterexample-GuidedNetworks, You2017DeepFunctions, Chen2023MonotonicityFinance}}.
For instance, in loan approval, it is coherent that an applicant with a better credit history has more possibilities of getting a loan approved. In cases where the credit history score (input) is not monotonic w.r.t. the loan approval probability (output), that would mean that clients with a better credit history are less prone to getting a loan $\textit{(ceteris paribus)}$. Therefore, the model would be generating unfair predictions. 
Thus, certifying partial monotonic predictions is crucial to guarantee fairness-aligned monotonicity constraints \cite{Wang2020DeontologicalConstraints}.


As a result, as highlighted in \cite{Runchi2023AnEffects, Dumitrescu2022MachineEffects}, logistic regression remains the standard approach in some fields such as credit industry due to its intrinsic interpretability and controlled behaviour, which aligns with the concerns of financial regulators \cite{EuropeanBankingAuthority2023MachineModels}. In contrast, ANNs are frequently dismissed due to their black-box nature, which could prevent us from knowing whether the model complies with the known mandatory monotonic relation \cite{cohen2021}. Consequently, training partial monotonic ANNs has been a relevant research field in recent years. To address this challenge, two main approaches have been developed \cite{Liu2020CertifiedNetworks}. First of all, constrained architectures could be considered so that monotonicity is assured \cite{Runje2023ConstrainedNetworks,Daniels2010MonotoneNetworks,You2017DeepFunctions}. Although any of these methods guarantee partial monotonicity, their architecture can be very restrictive or complex and difficult to implement \cite{Liu2020CertifiedNetworks, Mikulincer2024SizeApproximation}.


On the other hand, monotonicity can be enforced by adding a regularization term during the learning process.  \cite{Sivaraman2020Counterexample-GuidedNetworks} proposes a method to find counterexamples where the monotonicity is unmet. Besides, \cite{Gupta2019HowFlexibility} opted for sampling instances from the input data to compute a partial monotonic regularization term. Based on this idea, \cite{Monteiro2022MonotonicityClassification} computes a penalization term at random points sampled inside the convex hull defined by the input data. Although more flexible than constrained architectures, these methods cannot guarantee monotonicity across the entire input space, making them unsuitable in domains where regulators require monotonic models to ensure fair predictions \cite{EuropeanBankingAuthority2023MachineModels}.


Despite the numerous studies regarding ANNs' training towards partially monotonic solutions, fewer efforts have been made to certify the partial monotonicity of an already trained ANN \cite{Liu2020CertifiedNetworks,Sharma2020TestingModels}. For instance, \cite{Liu2020CertifiedNetworks} proposes an optimization-based technique to certify monotonicity for ANNs trained with piece-wise linear activation functions such as ReLU or Leaky ReLU. However, their method involves solving a mixed integer linear programming (MILP) problem, which becomes computationally expensive as the number of neurons increases, exhibiting exponential growth in complexity. Additionally, since the MILP method involves solving an NP problem, it may not always yield conclusive results: if the solver fails to find a strictly positive lower bound for the partial derivatives, it cannot confirm partial monotonicity, leaving the verification inconclusive.

On the other hand, \cite{Sharma2020TestingModels} proposes using a decision tree trained to approximate a black-box model and using an Satisfiability Modulo Theories  (SMT) solver to find possible counter-examples for partial monotonicity. However, the proposed algorithm does not guarantee finding counter-examples. Furthermore, if the ANN is truly partially monotonic, the algorithm is unable to identify a counter-example and conclusively determine whether the ANN is partially monotonic. Therefore, this method cannot be used to obtain a complete mathematical proof of the partial monotonicity of the model. Consequently, to the best of our knowledge, this is the first study presenting an external certification algorithm to certify whether a trained unconstrained ANN, or any black-box model, is partially monotonic without considering constrained architectures or piece-wise linear activation functions.

Although no external certification algorithm exists in the literature, one that determines whether a neural network is partially monotonic without constrained architectures would be highly valuable. For example, Article 179(1)(a) of the EU Capital Requirements Regulation (575/2013) (CRR) \cite{CapitalAuthority} requires internal rating-based models (IRB) to produce plausible and intuitive estimates. Accordingly, the European Banking Authority (EBA) states that, for each IRB model, the economic relationship between each risk driver and the output variable should be evaluated to verify plausibility and intuitiveness \cite{EuropeanBankingAuthority2023MachineModels}. In loan approval, for instance, it is neither intuitive nor plausible for credit history and loan approval probability to be non-monotonic. Thus, regulatory agencies enforce monotonicity constraints to ensure fairness and interpretability. Hence, an external certification algorithm verifying partial monotonicity would provide independent assurance.

Following this premise, this paper presents a novel approach to certify if an already-trained unconstrained ANN is partially monotonic. To accomplish this, we propose a novel methodology to solve a broader problem: mathematically certifying that a black-box model remains positive over its entire domain. Therefore, as increasing (or decreasing) partial monotonicity can be assessed by checking the positive (negative) sign of the partial derivatives, certifying partial monotonicity is equivalent to checking the positivity of the partial derivatives. For this purpose, a novel algorithm is presented capable of determining whether a black-box model is positive in its domain based on a finite set of evaluations.

To implement this approach, the algorithm leverages the model's Lipschitz continuity to establish specific neighborhoods around each positively evaluated point, ensuring the function remains positive within these neighborhoods. By utilizing Voronoi diagrams generated from the evaluated points and their corresponding neighborhoods, a sufficient condition is derived to ensure the function's positivity throughout the entire domain. Thus, this paper presents a novel approach that combines the analytical properties of the black-box model with the geometry of the input space to certify partial monotonicity. Moreover, based on the aforementioned algorithm, this study introduces a novel methodology to train unconstrained ANNs that can be later certified as partial monotonic. 

Although the Lipschitzianity has already been studied as a natural way to analyze the robustness \cite{Pauli2022TrainingBounds} and fairness \cite{Dwork2012FairnessAwareness} of an ANN, this paper utilizes the Lipschitz continuity in a novel approach to extend point-wise positivity, i.e., positivity at a point, to positivity at a neighborhood of the point. The exact computation of the Lipschitz constant of an ANN, even for simple network architectures, is NP-hard \cite{Virmaux2018LipschitzEstimation}. Nevertheless, some studies present methodologies to generate estimates of the Lipschitz constant \cite{Virmaux2018LipschitzEstimation},\cite{Shi2022EfficientlyPropagation}. However, this paper introduces, for the first time\footnote{
Although in \cite{Virmaux2018LipschitzEstimation} a general method for estimating the Lipschitz constant of a function computable in K operations is given, the specific computable expression of the partial derivatives of an ANN, which is non-trivial, is not provided.}, a specific estimate of the Lipschitz constant of the partial derivative of a neural network. 

On the other hand, the relationship between the Lipschitzianity of an ANN and partial monotonicity has also been explored. \cite{Kitouni2021RobustNetworks} proposes to normalize the weights of the ANN to achieve a predefined Lipschitz constant. Then, by adding a linear term multiplied by the imposed Lipschitz constant to the trained ANN, a monotonic residual connection can be used to make the model monotonic. However, it requires knowing the Lipschitz constant of the estimated function in advance. Besides, achieving the predefined Lipschitz constant imposes a huge weight normalization specifically for deep neural networks. Moreover, this method cannot be used to certify the partial monotonicity of an already-trained ANN.

Finally, partial monotonicity is not the only property that can be framed as a positivity constraint in ANNs. For example, certifying convexity reduces to checking positivity of second derivatives, making the proposed methodology applicable. Convexity has so far been studied only under restrictive architectural assumptions, and no general sufficient conditions or methods exist for certifying it in already trained ANNs \cite{Sivaprasad2021TheNetworks}.


The paper is structured as follows: Section \ref{sec:lipvor} introduces the LipVor Algorithm for positivity certification of a black-box model. Section \ref{sec:mono_nn} presents ANN partial monotonicity certification and the proposed Lipschitz upper bound for the partial derivatives. Section \ref{sec:mono_training} details the training methodology for unconstrained certified partial monotonic ANNs. Section \ref{sec:implementation_details} describes computational aspects, including libraries and complexity analysis. Section \ref{sec:case_studies} illustrates the approach through case studies. Finally, Section \ref{sec:conclusion} summarizes the contributions and results. 
The algorithm and experiments are available at \href{https://github.com/alejandropolo/LipVor}{https://github.com/alejandropolo/LipVor}.

\section{Positivity Certification: The LipVor Algorithm}\label{sec:lipvor}

As mentioned in Section \ref{sec:1}, many properties of a function can be stated in terms of a positivity condition. For instance, for a continuously differentiable function $f: \Omega \subseteq \mathbb{R}^n \rightarrow \mathbb{R}$, being increasing partially monotonic w.r.t. the $r^{th}$ input is equivalent to having positive $r^{th}$ partial derivative $\left(\frac{\partial f}{\partial x_r}>0 \right)$. Therefore, certifying partial monotonicity can be posed as a positivity certification problem of the partial derivatives. 

However, for a black-box function that can only be point-wise evaluated, determining positivity in its entire domain is challenging. This is particularly relevant for ANNs, where, despite knowing their analytical expression, verifying properties such as the positivity of partial derivatives across the entire domain is highly complex. As a result, ANNs are often treated as black-box models, where only the input-output relationship is accessible. Therefore, any analysis, such as verifying properties like partial monotonicity, can only be done through pointwise evaluations rather than direct analytical methods.

To address this challenge, this section presents an algorithm to certify the positivity of a black-box based on the evaluation of a finite set of points. Hence, for a black-box model $f$ and a finite set of positively evaluated points $\mathcal{P}$, we will utilize the Lipschitzianity of $f$ to state a specific neighborhood of each point where the function is also positive. 
Consequently, given a finite set of positively evaluated points, a sufficient condition will be given to determine whether the function is certified positive in the whole input domain $\Omega$.

\subsection{Local positivity Certification}

First of all, let us present the methodology to extend point-wise positivity to neighborhoods of the points where the function is also positive. Therefore, given a domain $\Omega \subseteq \mathbb{R}^n$, a point $x \in \Omega$ and a Lipschitz continuous function $f:\Omega \rightarrow \mathbb{R}$ $(f \in C^{0,1}(\Omega))$ such that $f(x)>0$, a specific neighborhood of $x$ will be stated where the function is certified positive.

By continuity of $f$ in $\Omega$, it can be proven that if $f(x)>0$, then there exists a neighborhood of $x$ where $f$ remains positive. However, just relying on the continuity of $f$ does not allow us to pinpoint a specific neighborhood. On the other hand, leveraging the Lipschitz continuity of $f$ enables us to precisely determine a specific ball centered at $x$, where positivity can be certified. Hence, point-wise positivity may be extended to neighborhoods where the function is also positive.

Therefore, let us start by presenting the Lipschitzianity of a function. Intuitively, for an L-Lipschitz function the output variation is bounded by a constant L, called the Lipschitz constant, and the variation of the input. 
Specifically, for real-valued functions under the Euclidean norm, we have the following definition  (cf. Def. 5.5.3 \cite{Sohrab2014BasicEdition}).

\begin{definition}\label{def:Lip}
    A function $f:\Omega \subseteq \mathbb{R}^n \rightarrow \mathbb{R}$ is said to be 
    L-Lipschitz in the $L^2$ norm (or simply L-Lipschitz or  Lipschitz continuous) if there exists a constant $L \geq 0$ such that:
    \begin{equation}\label{eq:Lip}
        |f(x)-f(y)| \leq L \|x-y\|, \quad \forall x,y \in \mathbb{R}^n,
    \end{equation}

    where $\|x-y\|$ refers to the $L^2$ norm \footnote{As we will later explain, in this paper, we will only use the $L^2$ norm unless stated otherwise. Therefore, although the notation $\| \cdot \|_2$ will not be explicitly used, every norm considered will be the $L^2$ norm.}.  Any such $L$ verifying Eq. \eqref{eq:Lip} is called a Lipschitz constant of the function and the smallest constant is the (best) Lipschitz constant. 
\end{definition}
Although Lipschitzianity of $f$ might seem at first as a strong condition to be assumed, it is worth noting that if a function $f$ is continuously differentiable in a compact domain $(f \in C^1(\Omega))$, then $f$ is Lipschitz continuous. Moreover, in a compact convex set $\Omega_C$, the Lipschitz constant of a function  $f\in C^1(\Omega_C)$ is the maximum norm of its gradient (Theorem 3.1.6
\cite[Rademacher]{Federer1996GeometricTheory}).

As mentioned before, using the Lipschitzianity of a function it is possible to determine a neighborhood in which the positivity is certified. Therefore, for each positively evaluated point $x \in \Omega$, we can determine an open ball $B(x,\delta) = \{p \in \Omega \mid \norm{x-p} < \delta\}$, centered at the point $x$ with a specific radius $\delta$, where the constraint is also fulfilled. 

Starting from a L-Lipschitz function $f \in C^{0,1}(\Omega)$  and $x_0 \in \Omega$ such that $f(x_0)> 0$, by definition of L-Lipscthitz
\begin{equation}\label{eq:lip_mono}
    |f(x_0)-f(x)| \leq L \|x_0-x\|.
\end{equation}
Consequently, taking $\delta_0 = \frac{f(x_0)}{L}$ and $x \in B(x_0,\delta_0)$, Eq. \eqref{eq:lip_mono} states that
\(
   |f(x_0)-f(x)| \leq L \|x_0-x\| < L \delta_0 = \cancel{L} \frac{f(x_0)}{\cancel{L}} = f(x_0). 
\)
Therefore, checking both sides of the inequality
\(
    |f(x_0)-f(x)|  < f(x_0) \implies f(x) >0,
\)
because if $f(x)<0$ then $|f(x_0)-f(x)| = f(x_0)-f(x) < f(x_0) \iff -f(x) < 0 \iff f(x)>0$ which would be a contradiction.

Hence, leveraging the Lipschitzianity of $f$ allows us to construct specific neighborhoods of $x$ where the positivity is verified whenever positivity is satisfied at $x$.
\begin{proposition}\label{prop:rad_mono}
    Let $f: \Omega \subseteq \mathbb{R}^n \rightarrow \mathbb{R}$ with $f \in C^0(\Omega)$ and $x_0 \in \Omega $. If  $f$ is L-Lipschitz and $f(x_0)>0$, then there exists a radius $\delta_0 = \frac{f(x_0)}{L}$ such that $f(x)> 0$, $\forall x \in B(x_0,\delta_0)$. 
\end{proposition}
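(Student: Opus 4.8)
The proof is essentially the computation already carried out in the text immediately preceding the proposition, so the plan is simply to package that argument cleanly. First I would fix $f$, $x_0$ and $L$ as in the hypotheses, set $\delta_0 = f(x_0)/L$ (assuming $L>0$; if $L=0$ the function is constant and positivity is immediate everywhere), and take an arbitrary $x \in B(x_0,\delta_0)$, so that $\norm{x_0 - x} < \delta_0$.

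Next I would apply the $L$-Lipschitz inequality from Definition \ref{def:Lip} to the pair $x_0, x$, obtaining $|f(x_0) - f(x)| \leq L\norm{x_0 - x}$. Chaining this with $\norm{x_0 - x} < \delta_0$ and the choice $\delta_0 = f(x_0)/L$ gives $|f(x_0) - f(x)| < L\delta_0 = f(x_0)$. The final step is to deduce $f(x) > 0$ from $|f(x_0) - f(x)| < f(x_0)$: arguing by contradiction, if $f(x) \leq 0$ then $|f(x_0) - f(x)| = f(x_0) - f(x) \geq f(x_0)$, contradicting the strict inequality; hence $f(x) > 0$. Since $x \in B(x_0,\delta_0)$ was arbitrary, this holds for all such $x$.

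There is no real obstacle here — the argument is elementary and self-contained given Definition \ref{def:Lip}. The only points that warrant a word of care are the degenerate case $L = 0$ (where $\delta_0$ is not defined by the formula, but $f$ is constant so the conclusion is trivial, or one simply takes $L > 0$ without loss of generality since any larger constant is also a Lipschitz constant) and the observation that the open ball $B(x_0,\delta_0)$ may extend outside $\Omega$, in which case the statement is understood for the points of $B(x_0,\delta_0) \cap \Omega$ — consistent with the notation $B(x,\delta) = \{p \in \Omega \mid \norm{x-p} < \delta\}$ introduced earlier. Note also that continuity of $f$ (the stated hypothesis $f \in C^0(\Omega)$) is not even needed beyond what Lipschitzianity already provides; it is listed only for emphasis.
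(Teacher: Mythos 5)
Your proof is correct and follows essentially the same route as the paper: apply the Lipschitz inequality to get $|f(x_0)-f(x)| < L\delta_0 = f(x_0)$ and conclude positivity by contradiction. If anything, your version is slightly tighter than the paper's, since you cover the boundary case $f(x)=0$ (the paper only argues from $f(x)<0$) and you explicitly dispose of the degenerate case $L=0$.
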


\subsection{Global Positivity Certification in a Compact Domain}

As stated in Proposition \ref{prop:rad_mono}, for each point $x_0 \in \Omega$ verifying the positivity condition, there is a ball $ B(x_0,\delta_0)$ where the condition is also fulfilled. Consequently, to check whether a function is positive in a compact domain $\Omega$, the problem reduces to verifying if, for a given set of points $\mathcal{P}= \{p_1,p_2,\dots,p_k\}$ and the obtained radii of certified positivity $\mathcal{R}=\{\delta_1,\delta_2,\dots,\delta_n\}$, the union of the respective balls centered at each point covers $\Omega$. If the union of balls covers $\Omega$, that would mean that, for every point $x \in \Omega$, there is a sufficiently close $p_i$ such that the positivity certification at $p_i$ extends to $x$.

Checking if this condition is fulfilled in N-dimensional spaces is not trivial. However, based on Voronoi diagrams \cite{Okabe2000SpatialKendall}, a sufficient condition can be stated to determine if a set of balls covers $\Omega$. 

A Voronoi diagram divides the input space into cells, with each cell associated with a specific point from a given set $\mathcal{P}$. In each cell, the point that is closest to any arbitrary point within that region is the one that defines the boundary of that cell. Formally, Voronoi diagrams can be defined as follows.

\begin{definition}

Let $\mathcal{P} = \{p_1, p_2, \dots, p_k\}$ be a set of $k$ distinct points (sites) in the Euclidean space $\mathbb{R}^n$. The Voronoi cell $R_i$ associated with a point $p_i$ is defined as the set of all points $x$ in $\mathbb{R}^n$ whose distance to $p_i$ is less than or equal to its distance to any other point in $\mathcal{P}$:
\[ R_i = \{x \in \mathbb{R}^n \mid \|x - p_i\| \leq \|x - p_j\|\; \forall j \neq i, 1 \leq j \leq k\}. \]

\end{definition}
This definition implies that $R_i$ contains all points closer to $p_i$ than any other point of $\mathcal{P}$. Hence, the Voronoi cell $R_i$ forms a convex polytope and is bounded by hyperplanes, where each hyperplane represents the locus of points equidistant between $p_i$ and one of its neighbouring sites. The set of all Voronoi cells $(R_i)_{1\leq i\leq k}$ constitutes the Voronoi diagram $\text{V}(\mathcal{P} )$ of the set of points $\mathcal{P}$.  For instance, in a 2-dimensional space, each Voronoi cell is represented as a convex polygon (Fig \ref{fig:vor_diag}).

Therefore, the Voronoi diagram $\text{V}(\mathcal{P} )$ presents a partition of the compact space $\Omega$ in Voronoi cells $(R_i)_{1\leq i\leq k}$  generated by each of the initials points in $\mathcal{P}$. Hence, if a ball of radius $\delta_j$ is placed centered at each $p_j \in \mathcal{P}$ such that $\delta_j$ is greater than the distance from $p_j$ to its furthest point of the Voronoi cell $R_j$, then the ball $B(p_j,\delta_j)$ intuitively covers $R_j$. Consequently, $\Omega$ would be covered by the union of balls $\bigcup_{1\leq i \leq k} B(p_i, \delta_i)$ as each ball covers its corresponding Voronoi cell. This idea is mathematically stated and proved in Lemma \ref{lem:covering}.

Therefore, consider a given L-Lipschitz function $f$ and a set of points $\mathcal{P} = \{p_1, p_2, \dots, p_k\}$ with  $\delta_j = \frac{f(p_j)}{L}$ the radius of extended positivity given by Proposition \ref{prop:rad_mono}. Then if $\delta_j$
is greater than the maximum distance from each $p_j$ to the furthest point of $R_j$, for all $p_j \in \mathcal{P}$,  each $B(p_j,\delta_j)$ covers its corresponding Voronoi cell $R_j$. Consequently, each Voronoi cell is certified positive and hence $f$ is certified positive in $\Omega$. 

This intuitive idea is mathematically proven in Theorem \ref{teo:glob_condition}, which states a sufficient condition for certified positivity. The complete proof of Theorem \ref{teo:glob_condition}  can be found in appendix \ref{sec:ap_glob_condition}. 

\begin{theorem}\label{teo:glob_condition}
    Let $f$ be a L-Lipschitz function and $\mathcal{P} = \{p_1, p_2, \dots, p_k\}$ a set of points in a compact domain $\Omega$. Set $\delta_j =  \frac{f(p_j)}{L}$, $\forall j \in I = \{1,2,\dots,k\}$ the radius of extended positivity and $V(\mathcal{P})=(R_j)_{j\in I}$ the Voronoi diagram of $\mathcal{P}$, then the function is positive in $\Omega$ if 
    \begin{equation}\label{eq:glob_condition}
    \max_{x \in R_j} d(x,p_j) < \delta_j, \forall j \in I .
    \end{equation}
\end{theorem}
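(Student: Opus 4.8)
The plan is to reduce the global statement to the local one from Proposition \ref{prop:rad_mono} via the covering structure of the Voronoi diagram. First I would invoke the fact that the Voronoi cells $(R_j)_{j\in I}$ form a partition (covering) of the compact domain $\Omega$: every $x\in\Omega$ belongs to at least one cell $R_j$. This is essentially the definition of the Voronoi diagram restricted to $\Omega$, so it should be available either directly or via the auxiliary Lemma \ref{lem:covering} referenced in the text.

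Next I would show that the hypothesis \eqref{eq:glob_condition} forces each ball $B(p_j,\delta_j)$ to contain its cell $R_j$. Indeed, if $x\in R_j$, then $d(x,p_j)\le \max_{y\in R_j} d(y,p_j) < \delta_j$, so $x\in B(p_j,\delta_j)$. Hence $R_j\subseteq B(p_j,\delta_j)$ for every $j\in I$, and therefore
\begin{equation*}
\Omega = \bigcup_{j\in I} R_j \subseteq \bigcup_{j\in I} B(p_j,\delta_j).
\end{equation*}

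The final step is to apply Proposition \ref{prop:rad_mono} cell by cell. For each $j\in I$, since $\delta_j = \frac{f(p_j)}{L}$ is exactly the radius of extended positivity and the proposition guarantees $f>0$ on all of $B(p_j,\delta_j)$, we get $f(x)>0$ for every $x\in R_j$. One subtlety worth addressing explicitly: the definition of $\delta_j$ implicitly requires $f(p_j)>0$ for the radius to be positive and for Proposition \ref{prop:rad_mono} to apply; strictly, $\max_{x\in R_j} d(x,p_j) < \delta_j$ already forces $\delta_j>0$ (the max over the nonempty set $R_j\ni p_j$ is $\ge 0$), hence $f(p_j)>0$, so the hypothesis is self-consistent and no separate assumption is needed. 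Ranging $j$ over $I$ and using that the cells cover $\Omega$ yields $f(x)>0$ for all $x\in\Omega$, which is the claim.

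I do not expect a serious obstacle here — the argument is a clean combination of two facts already in hand. The only genuine content is the covering lemma (that the Voronoi cells of $\mathcal{P}$ exhaust $\Omega$), which the excerpt defers to Lemma \ref{lem:covering}; everything else is a short chain of inclusions plus a pointwise application of Proposition \ref{prop:rad_mono}. If anything requires care, it is being precise about boundary points (a point on a shared face of two cells lies in both $R_i$ and $R_j$, but this causes no problem since positivity is certified from either center), and noting that $\Omega$ compact ensures the maxima $\max_{x\in R_j} d(x,p_j)$ are attained so that \eqref{eq:glob_condition} is a well-posed strict inequality.
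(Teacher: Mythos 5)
Your proof is correct and follows essentially the same route as the paper: Voronoi cells cover $\Omega$, the hypothesis forces $R_j \subseteq B(p_j,\delta_j)$, and Proposition \ref{prop:rad_mono} then certifies positivity cell by cell (the paper packages the covering step as Lemma \ref{lem:covering}, proved by contradiction, whereas you give the direct inclusion argument — a cosmetic difference). Your added remark that the hypothesis itself forces $\delta_j>0$, hence $f(p_j)>0$, is a small point the paper leaves implicit and is worth stating.
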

Note that the radius of certified positivity depends on the value of the evaluation of $f$ at the points of $\mathcal{P}$. In particular, whenever $f(p_j)=0$, the radius $\delta_j=0$. However, when considering the certification of positive functions in a compact domain $\Omega$, as stated in Lemma \ref{lem:eps_strict_pos}, there exists an $\varepsilon_{f,\Omega} >0$ such that
\begin{equation}\label{eq:eps_mono}
    f(x) \geq \varepsilon_{f,\Omega}, \forall \; x \in \Omega,
\end{equation}
so the radius of extended positivity will always be greater than 0. In such cases where there exists an $\varepsilon >0$ verifying Eq. \eqref{eq:eps_mono}, $f$ is said to be $\varepsilon$-positive. Therefore, by Lemma \ref{lem:eps_strict_pos}, every positive function in a compact domain $\Omega$ is  $\varepsilon_{f,\Omega}$-positive.

\begin{figure}[!htbp]
\centering
\includegraphics[width=2.5in]{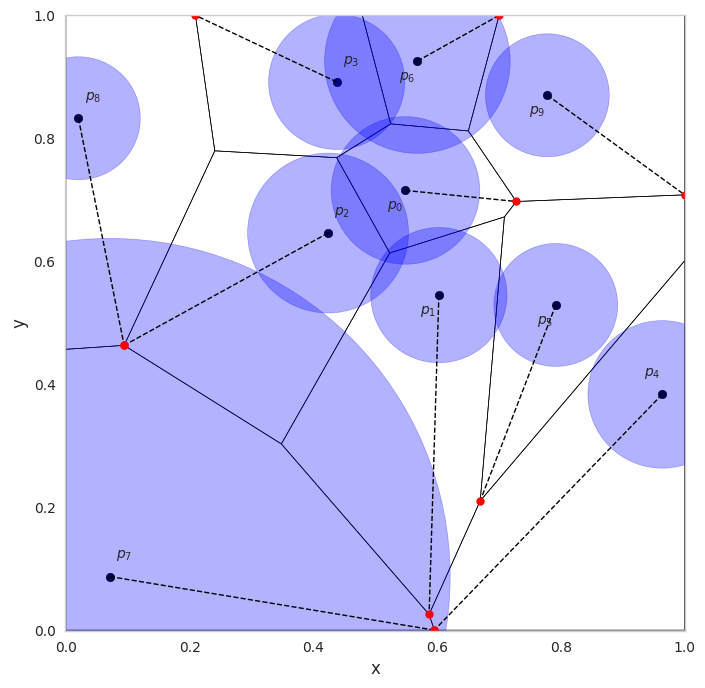}
\caption{Voronoi Diagram $V(\mathcal{P})$ for a set of 10 randomly allocated  points $ \mathcal{P} = \{p_0,p_1,\dots,p_{9}\}$ in a 2D space. Each red point represents the furthest vertex to each of the points in $\mathcal{P}$ and each circle is the ball of certified positivity given by Proposition \ref{prop:rad_mono}.}
\label{fig:vor_diag}
\end{figure}

\subsection{The LipVor Algorithm}\label{subsec:lipvor_algo}
After establishing a sufficient condition for the positivity certification of a function in Theorem \mbox{\ref{teo:glob_condition}}, we present a novel algorithm, LipVor, which determines in a finite number of steps whether a function is certified positive or is not $\varepsilon$-positive. As mentioned before, by Lemma \ref{lem:eps_strict_pos}, for a continuous function in a compact domain, $\varepsilon$-positivity is equivalent to positivity for a specific $\varepsilon$. Consequently, for a sufficiently small $\varepsilon$, concluding that the function is not $\varepsilon$-positive is equivalent to not being positive.

Before exploring the detailed formulation, we first introduce the core intuition behind the LipVor algorithm, which will later be expanded. First, leveraging the Lipschitzianity of the function $f$, it can be assessed, for each positively evaluated point, a ball centered at the point within which the function remains positive. Therefore, given a set of such balls corresponding to the verified positive points, the algorithm checks whether their union covers the entire domain $\Omega$. To achieve this, the space is partitioned into Voronoi cells using the Voronoi diagram generated from the set of points. If each ball covers the furthest vertex of its corresponding Voronoi cell, the entire domain is covered, and the positivity of the function across $\Omega$ is certified. If the domain is not fully covered and no counterexample to positivity is found, the algorithm expands the Voronoi diagram by appending one of the furthest vertices, iterating this process until full coverage is achieved or a counterexample is identified.

Given a function $f: \Omega \subset\mathbb{R}^n \rightarrow \mathbb{R}$,  defined in a compact domain $\Omega$, and any set of points $\mathcal{P} = \{p_1, p_2, \dots, p_k\}\subset \Omega$ if Eq. \eqref{eq:glob_condition} from Theorem \ref{teo:glob_condition} is not fulfilled, then the initial set of $k$ points $\mathcal{P}$ is not sufficient to guarantee positivity of $f$ in $\Omega$.
For example, Figure \ref{fig:vor_diag} represents a Voronoi diagram in 2D where Eq. \eqref{eq:glob_condition} is not fulfilled. Therefore, to try to certify partial monotonicity, LipVor presents a method of selecting points from $\Omega$ that are added to the initial sample $\mathcal{P}$ until Eq. \eqref{eq:glob_condition} is verified or a counter-example is found.

The idea regarding the LipVor Algorithm \ref{alg:alg1} is the following. Consider $\mathcal{P} = \{p_1, p_2, \dots, p_k\}$ a set of points in $\Omega$. The first step of the algorithm is to check that the value of the function at $\mathcal{P}$ is greater or equal than a certain $\varepsilon$. In that case, that would mean that the function fulfills the $\varepsilon$-positivity constraint in $\mathcal{P}$. If for any $p_j \in \mathcal{P},\; f (p_j) < \varepsilon$, then the algorithm would have already found a counter-example. Otherwise, $\varepsilon$-positivity is verified at $\mathcal{P}$ and the next step is to check if the local positivity condition at each point extends to a global positivity condition by Theorem \ref{teo:glob_condition}.

In case that Eq. \eqref{eq:glob_condition} is not satisfied, the LipVor Algorithm iteratively selects a point $p_{k+1} \in  \Omega$ to try to fulfill the aforementioned condition. The heuristic of selection of the point $p_{k+1}$ is the following. First of all, for each $p_j \in \mathcal{P}$, the furthest point $v_j$ in its Voronoi cell $R_j$ is computed. As each Voronoi cell is a convex polytope, the distance function attains its maximum in one of the vertex of the polytope. Therefore, to obtain the furthest point $v_j$ to the point $p_j$ generating the Voronoi cell $R_j$, the distance to each vertex of $R_j$ is computed. 

After computing the list of furthest vertices $\mathcal{V}=\{v_1, v_2, \dots, v_k\} $ for each Voronoi cell, the next point to be added to the Voronoi diagram is selected. Each of the furthest vertices $v_j$ is related to at least one parent point $p_j$ by the relation $v_j = \argmax_{x \in R_j} d(x,p_j)$. Recall that if $\delta_j$ is greater than the distance of each parent point $p_j$ to its furthest vertex $v_j$, then the Voronoi cell $R_j$ is covered by the ball with center $p_j$ and radius $\delta_j$. Therefore, starting from $\mathcal{V}$, those vertices already covered by the open ball centered at its parent point are discarded as the corresponding Voronoi cell is already certified positive. Therefore, the list of vertices is reduced to $\mathcal{V}_p = $$\{v_j \mid \delta_j \leq d(p_j,v_j) \}$. For instance, in Figure \ref{fig:vor_diag}, the only regions where $\delta_j$ is greater than the distance from the furthest vertex to the parent point $p_j$ are $R_6$ and $R_7$.

Considering the reduced list of vertices $\mathcal{V}_p$, the next point to add to the Voronoi diagram is selected based on the value of $f$ at the parent point $p_j$ and the number of adjacent balls ($n_{v_j}$) covering the vertex $v_j$. The idea of this procedure is to try to fill $\Omega$ with the least number of iterations possible as larger balls should cover the space faster. As the radius of certified positivity $\delta_j = \frac{f(p_j)}{L}$ is proportional to the value of $f$ at $p_j$, by continuity, the expected greatest value of $f$ in $\mathcal{V}_p$ is the one corresponding to the greatest value of $f$ in $\mathcal{P}$. Moreover, if the vertex is covered by some of the adjacent balls, the expected non-covered area of the space that could potentially be filled with the added point could be lesser. The way to measure the number of adjacent balls  covering the vertex is
\begin{equation}\label{eq:covered_vertices}
    n_{v_j} = |\{p_l \mid d(p_l,v_j) \leq \delta_l, \forall \, 0\leq l \leq k, l\not = j \}| . 
\end{equation}
Besides, with probability $0\leq p \leq 1$ the selected vertex $v_j$ is the one which corresponding parent $p_j$ has the smallest radius and again minimum $n_{v_j}$. The idea is to establish a trade-off between exploration and exploitation such as in Reinforcement Learning \cite{Sutton2018ReinforcementEd.}. Whenever the furthest vertex with the greatest value of the parent $f(p_j)$ is selected (exploitation) the best option for the next added point is chosen based on the current knowledge. On the other hand, selecting the vertex corresponding to the smallest parent's value (exploration) corresponds to trying new options that may lead to finding counter-examples.  Algorithm \ref{alg:alg1} depicts the procedure of LipVor.

\begin{algorithm}[!htbp]
\caption{LipVor}\label{alg:alg1}
\begin{algorithmic}

\STATE {\textbf{Input:} Function $f$, Lipschitz constant $L$, positivity constant $\varepsilon>0$, number of maximum iterations $N$ and a set of points $\mathcal{P} = \{p_1, p_2, \dots, p_k\}$.}
\STATE {\textbf{Output: } Bool variable ($\textit{isPositive}$) reflecting if $f$ is certified positive or not, counter-example (\textit{cExample}), if any, and next vertex to be added (\textit{nVertex}).}
\STATE {\textbf{Initialize} $\textit{isPositive} = $ True, $\textit{cExample} =$ None, $\textit{nVertex} =$ None, $\textit{counter} = 0$, $n_{\min}=\infty$ and $\delta_{\max}= -\infty$}.
\STATE {\textbf{Compute} the Voronoi diagram $\mathcal{V}(\mathcal{P})=\left(R_j\right)_{1\leq j\leq k}$;}
\STATE {\textbf{For} $n=1 \rightarrow N$ \textbf{do}:}
\STATE \hspace{0.5cm}{\textbf{For} $j=1 \rightarrow \text{length}(\mathcal{P})$ \textbf{do}:}
\STATE \hspace{1.0cm}{\textbf{If} $f(p_j)<\varepsilon$:}
\STATE \hspace{1.5cm} {\textit{isPositive} = False;}
\STATE \hspace{1.5cm} {\textit{cExample} = $p_j$;}
\STATE \hspace{1.5cm} {\textbf{Return} \textit{isPositive}, \textit{cExample} and \textit{nVertex} }
\STATE \hspace{1.0cm}{\textbf{Else}:}
\STATE \hspace{1.5cm} {Compute the furthest vertex:}
\STATE \hspace{2.0cm} {$v_j = \argmax_{x \in R_j} d(x,p_j)$;}
\STATE \hspace{1.5cm} {Compute the distance to the furthest vertex:}
\STATE \hspace{2.0cm} {$d_j = d(p_j,v_j)$;}
\STATE \hspace{1.5cm} {\textbf{If} $\delta_j = \frac{f(p_j)}{L}  \leq d_j $: }
\STATE \hspace{2.0cm} {Using Eq. \eqref{eq:covered_vertices} find $n_j$;}  
\STATE \hspace{2.0cm} {\textbf{If} $\delta_j \geq \delta_{\max} $ and $n_j \leq n_{\min}$:}
\STATE \hspace{2.5cm}{$n_{\min} = n_j  $;}
\STATE \hspace{2.5cm}{$\delta_{\max} = \delta_j $;}
\STATE \hspace{2.5cm}{$\textit{nVertex} = p_j  $;}
\STATE \hspace{2.5cm}{\textit{isPositive} = False;}
\STATE \hspace{1.5cm} {\textbf{Else} : }
\STATE \hspace{2.0cm} {\textit{counter +=1}}
\STATE \hspace{0.5cm} {\textbf{If} $\textit{counter} = \text{length}(\mathcal{P})$: }
\STATE \hspace{1.0cm} {\textbf{Return} \textit{isPositive}, \textit{cExample} and \textit{nVertex}} 
\STATE \hspace{0.5cm}{\textbf{End for}}
\STATE \hspace{0.5cm}{Add point $\textit{nVertex}$ to $\mathcal{P}$;}
\STATE \hspace{0.5cm}{Compute new Voronoi diagram $\mathcal{V}(\mathcal{P})$;}
\STATE \hspace{0.0cm}{\textbf{End for}}
\STATE \hspace{0.0cm}{\textbf{Return} \textit{isPositive}, \textit{cExample} and \textit{nVertex}. }
\end{algorithmic}
\label{alg1}
\end{algorithm}

In practice, the \hyperref[alg:alg1]{LipVor Algorithm} is slightly modified to find not just one counter-example but a list of them. The idea is to expand not just the probably greatest and least covered vertices but also those whose parent's evaluation is less than $\varepsilon$ and the value is the smallest (greatest in absolute value). Consequently, even if for any $p_j \in \mathcal{P}, f (p_j) < \varepsilon$, the point $p_j$ is also considered as a valid parent to find its furthest vertex. Therefore, the LipVor Algorithm would also expand the subdomain where the positivity is not verified. Hence, when using the LipVor Algorithm to certify if a function fulfills a positivity condition, the algorithm returns whether the function is positive and the subdomains where the condition is certified met and where it is not.

As stated in the pseudocode, the LipVor Algorithm ends in a finite number of $N$ steps. The following Theorem \ref{teo:convergence} shows that a value of $N$ can always be chosen depending only on $\Omega$, $\varepsilon$ and $L$ so that \hyperref[alg:alg1]{LipVor} always reaches a definitive conclusion within the given number of iterations. A complete proof of Theorem \ref{teo:convergence} is given in appendix \ref{sec:ap_bound_iterations}. 

\begin{theorem}\label{teo:convergence}
    Under the aforementioned conditions, the LipVor Algorithm concludes in a finite number of steps. Moreover, the maximum number of steps $N$ that the LipVor Algorithm needs to certify that a function $f$ is positive or to find a counter-example can be upper bounded by 
    \begin{equation}\label{eq:bound_iterations}
        N \leq \frac{\Vol(\bar{\Omega})}{\Vol \left(B\left(\frac{\varepsilon}{2L}\right)\right)},
    \end{equation}
    where $\bar{\Omega}$ is the domain extended by $\frac{\varepsilon}{2L}$ \footnote{The extension of a domain by a quantity $r$ is usually denoted as $\Omega + B\left(r\right)$, where $+$ represents the Minkowski sum \cite{Schneider2013ConvexTheory}.} and $B\left(\frac{\varepsilon}{2L}\right)$ is a ball of radius $\frac{\varepsilon}{2L}$.
\end{theorem}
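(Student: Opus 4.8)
The plan is to show that each pass of the outer loop of Algorithm \ref{alg:alg1} either returns a verdict --- a counter-example, or a positivity certificate via Theorem \ref{teo:glob_condition} --- or appends to $\mathcal{P}$ a point lying at distance at least $\tfrac{\varepsilon}{L}$ from every point currently in $\mathcal{P}$; the number of appended points is then controlled by an elementary packing argument. For the separation claim, consider an iteration that does not return, so that it reaches the line appending $\textit{nVertex}$ to $\mathcal{P}$; write $\textit{nVertex}=v_j=\argmax_{x\in R_j} d(x,p_j)$, the furthest vertex of the Voronoi cell $R_j$. Since the loop completed without returning a counter-example, $f(p_i)\ge\varepsilon$ for every $p_i\in\mathcal{P}$, and in particular $f(p_j)\ge\varepsilon$; moreover $v_j$ could have been stored in $\textit{nVertex}$ only under the branch $\delta_j=\tfrac{f(p_j)}{L}\le d_j=d(p_j,v_j)$, so $d(p_j,v_j)\ge\delta_j\ge\tfrac{\varepsilon}{L}$. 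On the other hand $v_j\in R_j$, so by the definition of the Voronoi cell $\|v_j-p_j\|\le\|v_j-p_i\|$ for all $p_i\in\mathcal{P}$, i.e. $p_j$ is a nearest site to $v_j$; combining, $d(v_j,p_i)\ge d(v_j,p_j)\ge\tfrac{\varepsilon}{L}$ for every $p_i\in\mathcal{P}$. Hence the points $q_1,\dots,q_N$ appended over the whole run are pairwise at distance at least $\tfrac{\varepsilon}{L}$, in particular all distinct; and this used only $\delta_j\ge\varepsilon/L$ and $v_j\in R_j$, so it is insensitive to the exploration/exploitation rule used to pick $\textit{nVertex}$.

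For the packing bound, note that each $q_m$ is a vertex of a Voronoi cell clipped to $\Omega$, hence $q_m\in\overline{\Omega}$, so the open ball $B\!\left(q_m,\tfrac{\varepsilon}{2L}\right)$ is contained in the $\tfrac{\varepsilon}{2L}$-enlargement $\bar\Omega=\Omega+B\!\left(\tfrac{\varepsilon}{2L}\right)$. Since the centres are pairwise at distance at least $\tfrac{\varepsilon}{L}=2\cdot\tfrac{\varepsilon}{2L}$, these $N$ balls have disjoint interiors, so
\begin{equation*}
N\,\Vol\!\left(B\!\left(\tfrac{\varepsilon}{2L}\right)\right)=\Vol\!\left(\bigcup_{m=1}^{N}B\!\left(q_m,\tfrac{\varepsilon}{2L}\right)\right)\le\Vol(\bar\Omega),
\end{equation*}
which rearranges to \eqref{eq:bound_iterations}.

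It remains to check that the algorithm actually stops with a verdict. A pass of the outer loop fails to append a point exactly when $\textit{counter}=\text{length}(\mathcal{P})$, i.e. when the $\textit{counter}$-incrementing branch was taken for every $j$, which means $\delta_j>d_j=\max_{x\in R_j}d(x,p_j)$ for all $j$; but this is precisely condition \eqref{eq:glob_condition}, so by Theorem \ref{teo:glob_condition} the function is certified positive and the algorithm returns. The only other way to leave the loop is the counter-example return. Therefore LipVor cannot run past the number of appendable points without producing a verdict, and by the packing bound that number is at most $\Vol(\bar\Omega)/\Vol\!\left(B\!\left(\tfrac{\varepsilon}{2L}\right)\right)$; taking $N$ equal to (the ceiling of) this quantity makes the algorithm terminate with either ``positive'' or a counter-example.

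The heart of the argument --- and the step I expect to need the most care --- is the separation claim: the same inequality $\delta_j\le d_j$ that disqualifies a vertex from being ``already covered'' is exactly what forces the newly appended vertex to sit at distance $\ge\varepsilon/L$ from all current sites, once one invokes the elementary fact that a Voronoi vertex is at least as close to its generating site as to any other site. The rest is bookkeeping, the one subtlety being that the Voronoi diagram must be intersected with $\Omega$ for the cells to be bounded polytopes (so ``furthest vertex'' is well defined), which is also why the packing is run in the enlarged domain $\bar\Omega$ rather than in $\Omega$ itself.
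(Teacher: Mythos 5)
Your proposal is correct and follows essentially the same route as the paper: the key separation claim that every point LipVor appends lies at distance at least $\varepsilon/L$ from all current sites (via $f(p_j)\ge\varepsilon$, the branch condition $\delta_j\le d_j$, and the Voronoi nearest-site property), followed by a packing of disjoint $\tfrac{\varepsilon}{2L}$-balls inside the Minkowski enlargement $\bar\Omega$ and a volume comparison. If anything, your bookkeeping is slightly tidier than the paper's, since you pack only the appended vertices (which are provably pairwise separated) rather than all of $\mathcal{P}$, whose initial points need not be $\varepsilon/L$-separated.
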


As observed, $N$ depends on the selected $\varepsilon$ and the Lipschitz constant $L$. Consequently, a larger $\varepsilon$ or a smaller $L$ increases the size of the minimum acceptable balls, reducing the maximum number of iterations required to cover $\Omega$ in the worst-case scenario, as dictated by Eq. \eqref{eq:bound_iterations}

After the LipVor Algorithm has reached the maximum number of iterations $N$ or stops upon verifying the sufficient condition stated in Eq. \eqref{eq:glob_condition}, the function $f$ is certified positive (if $\textit{isPositive} =$ True) or certified not $\varepsilon$-positive (if $\textit{cExamples} \not=$ None or $\textit{cExamples} =$ None and $\textit{isPositive} =$ False). However, as mentioned before, considering $\varepsilon$-positive is not a restrictive constraint as any positive function in a compact domain is $\varepsilon_{f,\Omega}$-positive for a specific $\varepsilon_{f,\Omega}$. Therefore decreasing sufficiently $\varepsilon$, any positive function becomes $\varepsilon$-positive. Consequently, considering $\varepsilon$-positivity is just a formal convention to establish Eq. \eqref{eq:bound_iterations}.

As a final remark, as stated in appendix \ref{sec:ap_bound_iterations}, Eq. \eqref{eq:bound_iterations} can be upper bounded by 
\(
    N \leq \frac{\Vol(\bar{\Omega})}{\Vol \left(B\left( \frac{\varepsilon}{2L}\right)\right)}\leq
    \left(\frac{2\cdot \left(a\cdot L + \varepsilon\right)}{\pi^{\frac{1}{2}}\cdot\varepsilon}\right)^n \Gamma \left(\frac{n}{2}+1\right),
\)
where $\Gamma$ is the gamma function and $a$ is a constant dependent on the size of the domain $\Omega$.

\section{Certification of Partial Monotonicity for ANNs}\label{sec:mono_nn}
\subsection{Partial Monotonicity of an ANN}\label{sub_sec:mono_nn}
As previously mentioned in Section \ref{sec:1}, partial monotonicity of a function $g$ w.r.t. the $r^{th}$ input, with $0\leq r \leq n$, can be formulated as a positivity constraint over the partial derivative $g_{x_r}$. Mathematically, a function $g: \Omega \subseteq \mathbb{R}^n \rightarrow \mathbb{R}$ is strictly increasing (resp. decreasing) partially monotonic w.r.t. the $r^{th}$ input if 
\begin{equation}\label{eq:mono}
          g(x_1,\dots,x_{r},\dots,x_n) < g(x_1,\dots,x'_{r}\dots,x_n) \, , \forall \ x_r < x'_r\,
 \end{equation}
 \(
          (\text{resp.} \; g(x_1,\dots,x_{r},\dots,x_n) > g(x_1,\dots,x'_{r}\dots,x_n)).
 \)

Consequently, a function $g$ will be partially monotonic w.r.t. a set of features $\{x_{i_1},\dots,x_{i_k}\}$ wih $k\leq n$ whenever Eq. \eqref{eq:mono} is verified for each $i_j$ simultaneously with  $j \in \{1,\dots,k\}$.

Therefore, considering a function $g: \Omega \subseteq \mathbb{R}^n \rightarrow \mathbb{R}$ with $g \in C^1(\Omega)$, if $g_{x_r}(x)>0$ (resp. $g_{x_r}(x)<0$) the function $g$ is strictly increasing (resp. decreasing) partially monotonic w.r.t. the $r^{th}$ input feature at the point $x \in \Omega $. Moreover, whenever $g_{x_r}(x) \geq \varepsilon$ ($g_{x_r}(x) \leq -\varepsilon$), the function $g$ is increasing (decreasing) partially $\varepsilon$-monotonic. Consequently, the methodology stated in Section \ref{sec:lipvor} can be applied to certify partial monotonicity of a function $g \in C^{1,1}(\Omega)$ (differentiable and with L-Lipschitz partial derivatives)  by just taking $f := g_{x_r}$.

On the other hand, ANNs, specifically feedforward neural networks, are composed of interconnected layers of neurons that process and transform data. Therefore, an ANN $g: \Omega \subseteq \mathbb{R}^n \rightarrow \mathbb{R}$ with $K$ layers and $n^l$ neurons in the l-layer, with $0 < l \leq K$, can be mathematically described as a composition of point-wise multiplication with non-linear activation functions, such that the output of the $l^{th}$ layer $o^l$ is given by
\begin{align}
    o^0 &= \mathbf{x}, \nonumber \\
    z^l &= o^{l-1} \cdot W^l + b^l \quad \text{for } l = 1, 2, \ldots, K,  \label{eq:z_l}\\\
o^l &= \varphi^l \left( z^l \right), \quad \text{for } l = 1, 2, \ldots, K,  \label{eq:o_l}\\
    y &= g(\mathbf{x}; \mathbf{W}, \mathbf{b}) =  \varphi^K \left( \mathbf o^{K-1} \cdot W^K + b^K \right) \label{eq:ANN_descrip},
\end{align}
where $\mathbf{x}$ represents the input data, $W^l \in \mathbb{R}^{n^{l-1}} \times \mathbb{R}^{n^{l}}$, $b^l \in \mathbb{R}^{n^{l}}$ and $\varphi^l(\cdot)$ are the weights, bias and activation function of the $l^{th}$ layer respectively. 

In particular, ANNs using $C^2 (\Omega)$ activation functions, such as Sigmoid, Tanh, etc., are $C^{1,1}(\Omega)$ functions in a compact domain $\Omega$. Moreover, common non-smooth activations (e.g., ReLU) can be approximated via smooth surrogates like SoftPlus \cite{Dugas2000IncorporatingPricing}, enabling the application of the proposed methodology.
Therefore, knowing an upper bound of the Lipschitz constant of the partial derivatives of an ANN, it is possible to pose the partial monotonicity certification problem as an application of the LipVor Algorithm to the partial derivatives. Consequently, computing an upper bound $\hat{L}$ of the Lipschitz constant $L$ of the partial derivatives of an ANN becomes crucial.

Without loss of generality, in the following methodology, the ANN is going to be considered strictly increasing partially monotonic w.r.t. the $r^{th}$ input. Subsequently, when dealing with multiple monotonic features, the procedure remains the same by simply considering the minimum radius among each of the monotonic features. By using this minimum radius as the ball's radius, we ensure monotonicity is preserved for each monotonic feature. Conversely, if there exists a point $p_j$ where the constraint is not satisfied for any of the monotonic features, a radius $\delta_j=\frac{g_{x_i}(p_j)}{L_j}$ is computed for each unsatisfied constraint and the maximum of these radii is used. This maximum radius represents the largest radius within which some of the monotonicity conditions are not met.
Moreover, the methodology could be extended to vector-valued ANN $g: \mathbb{R}^n \rightarrow \mathbb{R}^m$ considering each of the components from the codomain $g^i, i \in \{1,2,\dots,m\}$.

\subsection{Lipschitz Constant Estimate of the Partial Derivative of an ANN}
As mentioned in Section \ref{sec:1}, the study of estimates of the Lipschitz constant of an ANN has already been conducted, but this was not the case for the Lipschitz constant of the partial derivative of an ANN. This study presents a novel approach to computing an upper bound for such Lipschitz constant.

Recall that if $f$ is an L-Lipschitz function $f \in C^{0,1}(\Omega_C)$ in a compact convex domain $\Omega_C$ then the Lipschitz constant of $f$ in $\Omega_C$ is the maximum norm of its gradient. This result, presented in the following proposition, could be stated more generally; however, for the sake of simplicity, the result is restricted to differentiable functions with codomain $\mathbb{R}$. (cf. Thm. 3.1.6
\cite[Rademacher]{Federer1996GeometricTheory}).
\begin{proposition}\label{prop:lip_gradient}
    Let $\Omega_C \subseteq\mathbb{R}^n$ a compact convex domain and $f:\Omega_C  \rightarrow \mathbb{R}$ a $C^1(\Omega_C)$ function. Then
    \(
        L  = \sup_{x\in \Omega_C}\norm{\nabla f(x)},
    \)
    where $\nabla f(x)$ is the gradient of the function $f$.
\end{proposition}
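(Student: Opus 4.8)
The plan is to establish the two inequalities $L \le \sup_{x\in\Omega_C}\norm{\nabla f(x)}$ and $L \ge \sup_{x\in\Omega_C}\norm{\nabla f(x)}$, where $L$ denotes the best (smallest) Lipschitz constant of $f$ on $\Omega_C$ in the sense of Definition \ref{def:Lip}. Note first that $x\mapsto\norm{\nabla f(x)}$ is continuous on the compact set $\Omega_C$, so the supremum is attained and finite, which is why it may legitimately be called the \emph{maximum} norm of the gradient; write $M := \sup_{x\in\Omega_C}\norm{\nabla f(x)}$.

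For the inequality $L\le M$, I would show directly that $M$ is a Lipschitz constant. Given $x,y\in\Omega_C$, convexity guarantees that the whole segment $\{x+t(y-x):t\in[0,1]\}$ lies in $\Omega_C$, so $\phi(t):=f\bigl(x+t(y-x)\bigr)$ is well defined and $C^1$ on $[0,1]$ with $\phi'(t)=\langle\nabla f(x+t(y-x)),\,y-x\rangle$. By the fundamental theorem of calculus (or, equivalently, the mean value theorem) together with the Cauchy–Schwarz inequality,
\[ |f(y)-f(x)| = \left|\int_0^1 \phi'(t)\,dt\right| \le \int_0^1 \norm{\nabla f(x+t(y-x))}\,\norm{y-x}\,dt \le M\,\norm{y-x}. \]
Hence $M$ satisfies Eq. \eqref{eq:Lip}, so $L\le M$ by minimality of $L$.

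For the reverse inequality, let $L'$ be an arbitrary Lipschitz constant of $f$ and fix a point $x$ in the interior of $\Omega_C$. For any unit vector $u$ and all sufficiently small $t>0$ the point $x+tu$ still lies in $\Omega_C$, so $|f(x+tu)-f(x)|\le L't$; dividing by $t$ and letting $t\to 0^+$ yields $|\langle\nabla f(x),u\rangle| = |D_u f(x)| \le L'$. Choosing $u=\nabla f(x)/\norm{\nabla f(x)}$ when $\nabla f(x)\neq 0$ (and the bound being trivial otherwise) gives $\norm{\nabla f(x)}\le L'$ at every interior point $x$. Since $\nabla f$ is continuous and the interior of a convex domain is dense in its closure $\Omega_C$, this extends to all of $\Omega_C$, so $M\le L'$; taking $L'=L$ gives $M\le L$, and combined with the previous step $L=M$.

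The main obstacle is precisely this last passage: the pointwise bound $\norm{\nabla f(x)}\le L'$ is obtained only at interior points, because at a boundary point one cannot in general displace $x$ in every direction while staying inside $\Omega_C$. The remedy is to invoke continuity of $\nabla f$ and density of $\operatorname{int}(\Omega_C)$ in $\Omega_C$, a standard property of (full-dimensional) convex sets, and this is exactly the place where the hypothesis that $\Omega_C$ is a compact \emph{convex} domain — rather than an arbitrary compact set — is essential; the remaining steps are routine applications of the fundamental theorem of calculus and of the definition of the directional derivative.
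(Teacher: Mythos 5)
Your proof is correct. Note that the paper itself does not prove this proposition; it is quoted as a known result (cf.\ Thm.~3.1.6 in Federer, attributed to Rademacher-type arguments), so there is no in-paper argument to compare against. Your two-sided argument is the standard one and is sound: the direction $L\le\sup_{x}\norm{\nabla f(x)}$ uses exactly the convexity hypothesis (integrating $\phi'(t)=\langle\nabla f(x+t(y-x)),y-x\rangle$ along the segment and applying Cauchy--Schwarz), and the direction $\sup_{x}\norm{\nabla f(x)}\le L$ follows from bounding directional derivatives by the Lipschitz quotient at interior points and passing to the boundary by continuity of $\nabla f$. The one point to be explicit about is the step you already flag: the density of $\operatorname{int}(\Omega_C)$ in $\Omega_C$ requires $\Omega_C$ to have nonempty interior, which is implicit in the paper's use of the word ``domain'' (for a lower-dimensional convex compact set the statement would have to be read with the gradient replaced by its tangential component). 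With that reading, your proof is complete, and it also makes visible why compactness is needed only to guarantee that the supremum is finite and attained, while convexity is what makes the supremum of $\norm{\nabla f}$ an actual Lipschitz constant rather than merely a lower bound for one.
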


Considering now an ANN $g:\Omega_C \subseteq \mathbb{R}^n\rightarrow \mathbb{R}$ with $\Omega_C$ a compact convex domain and $g \in C^{1,1}(\Omega)$, then, applying Proposition \ref{prop:lip_gradient} to the $r^{th}$ partial derivative $g_{x_r}$ of the ANN, the Lipschitz constant is obtained as the maximum norm of $\tilde{H}_g$ where 
\(
    \tilde{H}_g = \left(\frac{\partial g}{\partial x_r \partial x_j}\right)_{j},   \forall \; 1\leq j \leq n,
\)
is the Hessian of the output of $g$ w.r.t. the $r^{th}$ input. 
Hence by Proposition \ref{prop:lip_gradient}, 
\(
    L = \sup_{x\in \Omega} \norm{\nabla g_{x_r}(x)} =  \sup_{x\in \Omega} \norm{\tilde{H}_g(x)}.
\)
Consequently, an upper bound of $\norm{\tilde{H}_g}$ establishes an upper estimate $\hat{L}$ of the Lipschitz constant $L$ of the $r^{th}$ partial derivative of an ANN. 

To generate an upper bound of $\norm{\tilde{H}_g}$, it is going to be used that an ANN with K-layers can be described as a composition of activation functions and linear transformations (Eq. \eqref{eq:z_l}-\eqref{eq:ANN_descrip}). Therefore, considering  
\begin{equation}\label{eq:H_0_l}
 H_0^l = \left(\frac{\partial^2 o^l_{k}}{\partial x_i \partial x_j }\right)_{i,j,k},\, \forall \, 1\leq i,j \leq n^0, 1 \leq k \leq n^l,
\end{equation}
the Hessian of the $l^{th}$ layer w.r.t. the input layer (layer 0), then, by Eq. \eqref{eq:ANN_descrip},
\(
    \tilde{H}_g = \tilde{H}_0^K = \left(\frac{\partial^2 o^K}{\partial x_r \partial x_j }\right)_{j},\, \forall \, 1\leq j \leq n^0.
\)
Therefore, the maximum norm of the Hessian of the $k^{th}$ layer of the ANN w.r.t. the $r^{th}$ input ($L_{x_r}^K$) can be recursively upper bounded using the weights of the preceding layers $W^l$, the Hessian of the $K^{th}$ layer
\(
    H_K^K = \left(\frac{\partial^2 o^K}{\partial z_i \partial z_j }\right)_{i,j}, 1\leq i,j \leq n^{K-1}
\)
with respect to $z^l$ (Eq. \eqref{eq:z_l}) and the upper bound of the $K-1$ layer $\hat{L}_{x_r}^{K-1}$ as follows.

\begin{theorem}\label{teo:lip_acot}
        Let $g:\Omega \subseteq\mathbb{R}^n \rightarrow \mathbb{R}$ be an ANN with K-layers, then the Lipschitz constant $ L_{x_r}^K$ of the $r^{th}$ partial derivative $g_{x_r}$ of an ANN verifies that
    \begin{multline}\label{eq:lip_up_bound}
        L_{x_r}^K \leq \hat{L}_{x_r}^K := \max|a_k^K|\cdot\lVert W_{1j}^1\rVert \cdot\lVert W^1\rVert\cdot\lVert W^2\rVert ^2 \dots \lVert W^K\rVert ^2\\ +\hat{L}_{x_r}^{K-1} 
        \cdot\lVert W^K \rVert,
    \end{multline}
    where $W^k$ is the weight matrix of the $k^{th}$ layer, $W_{1j}^1$ is the first row of the weight matrix $W^1$, $\hat{L}_{x_r}^{K-1}$ is the upper bound of Lipschitz constant of the $r^{th}$ partial derivative of the $K-1$ layer of the ANN and $a_k^K = \|H_K^K\|_{\infty}:= \max_{1\leq i\leq n}\left(\sum_{j=1}^n \left(H_K^K\right)_{i,j}\right)$.
\end{theorem}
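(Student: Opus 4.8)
The plan is to prove the bound by induction on the depth $K$, exploiting the compositional description of the ANN in Eq. \eqref{eq:z_l}--\eqref{eq:ANN_descrip}. Peeling off the last layer as $o^K=\varphi^K(z^K)$ with $z^K=o^{K-1}\cdot W^K+b^K$, I would differentiate twice, once with respect to the monotone coordinate $x_r$ and once with respect to a generic input coordinate $x_j$. The chain rule through $z^K$ gives the exact identity
\[
\frac{\partial^2 o^K}{\partial x_r\,\partial x_j}
=\sum_{a,b}\frac{\partial^2 o^K}{\partial z^K_a\,\partial z^K_b}\,\frac{\partial z^K_a}{\partial x_r}\,\frac{\partial z^K_b}{\partial x_j}
\;+\;\sum_a\frac{\partial o^K}{\partial z^K_a}\,\frac{\partial^2 z^K_a}{\partial x_r\,\partial x_j},
\]
and since $z^K$ depends affinely on $o^{K-1}$ we have $\frac{\partial^2 z^K_a}{\partial x_r\,\partial x_j}=\sum_m W^K_{ma}\,\frac{\partial^2 o^{K-1}_m}{\partial x_r\,\partial x_j}$. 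Hence the second summand is the previous Hessian $\tilde H_0^{K-1}$ contracted with $W^K$ (through the Jacobian $\partial o^K/\partial z^K$), while the first summand is the curvature of the last layer, $H_K^K=\big(\partial^2 o^K/\partial z^K_a\partial z^K_b\big)_{a,b}$, contracted against the two Jacobians $\partial z^K/\partial x_r$ and $\partial z^K/\partial x_j$. Taking operator norms and using the triangle inequality splits the estimate into a \emph{recursive} piece and a \emph{base} piece, corresponding to the two summands of Eq. \eqref{eq:lip_up_bound}.

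For the recursive piece, submultiplicativity bounds the norm of the second summand by $\norm{\partial o^K/\partial z^K}\cdot\norm{W^K}\cdot\norm{\tilde H_0^{K-1}}$; since the activations considered are ($1$-)Lipschitz on the compact image of $z^K$ the first factor is uniformly bounded, and the inductive hypothesis $\norm{\tilde H_0^{K-1}}\le\hat L_{x_r}^{K-1}$ then yields the summand $\hat L_{x_r}^{K-1}\cdot\norm{W^K}$.

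For the base piece I would unroll the chain rule through all layers to control the two Jacobian factors. The full Jacobian of $o^{K-1}$ with respect to the input is a product of diagonal activation-derivative matrices and the weight matrices $W^1,\dots,W^{K-1}$, hence has norm at most $\norm{W^1}\norm{W^2}\cdots\norm{W^{K-1}}$, whereas the directional derivative with respect to the single coordinate $x_r$ starts from $o^0=\mathbf x$ and so picks up only the first row $W_{1j}^1$ of $W^1$ in place of the full matrix, giving $\norm{W_{1j}^1}\norm{W^2}\cdots\norm{W^{K-1}}$; each of $\partial z^K/\partial x_r$ and $\partial z^K/\partial x_j$ then tacks on one factor of $W^K$. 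Multiplying the two Jacobian bounds, the copy of $W^l$ present in each produces $\norm{W^l}^2$ for $2\le l\le K$, while layer $1$ contributes $\norm{W_{1j}^1}\cdot\norm{W^1}$; together with the factor $\max_k|a_k^K|=\norm{H_K^K}_\infty$ from the last layer's curvature, this is exactly the first summand of Eq. \eqref{eq:lip_up_bound}. The induction bottoms out at $l=0$, where $o^0=\mathbf x$ is affine so $\tilde H_0^0\equiv 0$ and $\hat L_{x_r}^0=0$, and the recursion then reproduces the one-layer bound (which can also be checked by hand). Finally, $g=o^K$ and, $\Omega$ being compact and convex, Proposition \ref{prop:lip_gradient} applied to $g_{x_r}$ gives $L_{x_r}^K=\sup_{x\in\Omega}\norm{\nabla g_{x_r}(x)}=\sup_{x\in\Omega}\norm{\tilde H_0^K(x)}\le\hat L_{x_r}^K$ from the pointwise estimate just obtained.

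I expect the main obstacle to be the bookkeeping in the base piece: keeping straight which weight matrices enter squared and which enter linearly, handling the asymmetry between the full-gradient direction and the single-coordinate direction $x_r$ (the origin of the lone $\norm{W_{1j}^1}$ factor), and making precise how the intermediate activation-derivative matrices and the slope of $\varphi^K$ are absorbed, so that the contraction against $H_K^K$ is genuinely controlled by $\max_k|a_k^K|$ in the chosen induced norm. It is also worth fixing once and for all the exact meaning of $H_K^K$ and of $\norm{\cdot}_\infty$ so that the passage from the chain-rule identity to Eq. \eqref{eq:lip_up_bound} becomes literally term-by-term. By contrast, the recursive summand, the operator-norm estimates, and the inductive scaffolding should be routine.
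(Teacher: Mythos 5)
Your proposal follows essentially the same route as the paper's proof in Appendix~\ref{lips_appendix}: induction on depth, the second-order chain rule through the last layer (which is precisely the tensor recursion of Eq.~\eqref{eq:ap_hess_tensor}), a triangle-inequality split into the last-layer curvature term and the propagated Hessian term, submultiplicative bounds on the unrolled Jacobians yielding the single $\lVert W_{1j}^1\rVert$ factor and the squared norms for layers $2,\dots,K$, and the base case $\tilde H_0^0=0$. The points you flag as needing care (the diagonal structure of $H_K^K$ giving the $\max_k|a_k^K|$ factor, and the implicit bound $\lVert J_l^l\rVert\le 1$ on the activation derivatives) are exactly how the paper resolves them, so the plan is sound and matches the paper's argument.
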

A complete proof of Theorem \ref{teo:lip_acot} can be found in appendix \ref{lips_appendix}.
 
Finally, it is important to acknowledge that for the most commonly used activation functions in neural networks, the second derivative is upper bounded by 1. For example, in the case of the sigmoid activation function, the second derivative is given by $\sigma''(x)=\sigma'(x)(1-2\cdot \sigma(x))$, which is clearly upper bounded by 1. Moreover, note that the aforementioned theorem is valid for vector-valued ANNs $g:\mathbb{R}^n \rightarrow \mathbb{R}^m$ as  Eq. \eqref{eq:lip_up_bound} is recursively obtained using the upper bound of the intermediate layers which are vector-valued functions with codomain's dimension the number of neurons in the intermediate layer. 
\section{Extension: Training Certified Monotonic Neural Networks}\label{sec:mono_training}

This section introduces a methodology to train unconstrained, probably partial $\varepsilon$-monotonic ANNs that can later be certified using the \mbox{\hyperref[alg:alg1]{LipVor Algorithm}}. The approach employs a modified training loss, similar to \cite{Monteiro2022MonotonicityClassification}, where a penalization term enforces an $\varepsilon$-monotonic relation on the training data. By continuity, this is expected to ensure partial monotonicity in neighborhoods of the training points. However, as noted in \cite{Monteiro2022MonotonicityClassification}, monotonicity is guaranteed only near penalized regions, with no assurance over the whole domain $\Omega$. Therefore, after training, the LipVor Algorithm is applied to certify partial monotonicity. Hence, this methodology enables certification without requiring constrained architectures.

Considering an ANN $g:\Omega \subseteq\mathbb{R}^n \rightarrow \mathbb{R}$, to find the optimum ANN's parameters, an optimization procedure to minimize a loss function is followed. The loss function is intended to measure the difference between the real output values $y$ and the predicted $\hat{y}$ ones.  Therefore, the loss function of the ANN can be stated as
\(
\mathcal{L}(\mathbf{W}, \mathbf{b}; \mathcal{D}) = \sum_{(\mathbf{x}, \mathbf{y}) \in \mathcal{D}} \ell\left(y, g(\mathbf{x}; \mathbf{W}, \mathbf{b})\right), 
\)
where $\mathbf{W}$ and $\mathbf{b}$ denote the weights and biases of the neural network, $\mathcal{D}$ is the dataset containing input-output pairs $(\mathbf{x}, y)$ and $\ell$ is the loss function that measures the discrepancy between the predicted output and the actual output. For regression problems, the loss function is usually the sum of squared errors, while for classification problems, the maximum likelihood.


During optimization, the ANN is trained via gradient descent, with gradients efficiently computed by backpropagation. To enforce a partial $\varepsilon$-monotonic relation w.r.t. the $r^{th}$ input, a regularization term is added to the loss, penalizing violations of the constraint. Consequently, the modified loss is given as follows
\(
\tilde{\mathcal{L}}(\mathbf{W}, \mathbf{b}; \mathcal{D}) = \mathcal{L}(\mathbf{W}, \mathbf{b}; \mathcal{D}) + \lambda \Omega_{g,r}(\mathcal{D},\varepsilon),
\)
where $\mathcal{L}$ is the original loss, $\mathcal{D}$ the training dataset, and $\Omega_{g,r}$ the $\varepsilon$-monotonic penalization based on the $r^{th}$ column of the Jacobian matrix. Moreover, the coefficient $\lambda$ controls the strength of the regularization.

To compute the penalization term, the sum of the values of the $r^{th}$ column where the partial $\varepsilon$-monotonicity constraint is not followed is computed. Therefore, if the relation is increasing (decreasing) $\varepsilon$-monotonic, then the rectified values of the negative (positive) sum of the Jacobian matrix are summed up. Consequently, the regularization term is computed as 
\begin{equation}\label{eq:pen_term}
    \Omega_{g,r}(\mathcal{\bar{D}},\varepsilon) = \sum_{x \in \mathcal{\bar{D}}} \left[\max\left(0,-J_{r}(\mathbf{x})+ \varepsilon\right)\right],
\end{equation}
where  $J_{r}$ represents the $r^{th}$ column of the Jacobian matrix evaluated at $\mathbf{x}$ $\left(\text{i.e.,} \quad J_{r} = \frac{\partial g}{\partial x_r}\right)$. Considering this strategy, the parameters' optimization will be guided towards ensuring that the partial $\varepsilon$-monotonicity constraint is fulfilled for all data samples in $\mathcal{D}$. Notice that in practice, it is not necessary to compute the full Hessian in the optimization step but instead just the gradient of the Jacobian for the partial derivatives for which the monotone restriction is imposed. This step could be done efficiently using tensorial derivatives, as proposed in \cite{Gonzalo2023ExplainableNetworks, Pizarroso2022NeuralSens:Networks}

It is worth mentioning that since \mbox{\(\varepsilon\)} governs the minimum acceptable slope of the monotonicity constraint, a very large \mbox{\(\varepsilon\)} might result in a steeper slope than necessary, causing unnecessary penalization that could harm model performance. Hence, we suggest tuning epsilon based on the specific characteristics of the problem, starting with a small positive value if no prior knowledge is available. Moreover, if certification is not achieved within a reasonable time, increasing $\varepsilon$ slightly and retraining can help the algorithm converge more efficiently while still enforcing monotonicity.

After training, the LipVor Algorithm is executed to certify partial monotonicity. If the ANN is not partially $\varepsilon$-monotonic and counter-examples are found, these points form an external dataset where the penalization is also enforced. Thus, the penalization targets subdomains where monotonicity fails, improving over \cite{Monteiro2022MonotonicityClassification}, which uses random points. Fine-tuning the ANN with both training and external data is expected to yield certification by LipVor. Therefore, this process can be applied iteratively until convergence to a partial monotonic ANN.

Besides, to avoid overfitting, we apply early stopping after $N_p$ consecutive epochs without validation loss improvement, where $N_p$ denotes patience. Early stopping is only triggered when the penalization loss of the partial $\varepsilon$-monotonicity constraint reaches 0, indicating the ANN is partial $\varepsilon$-monotonic on the training data and justifying attempting to verify in $\Omega$.

\section{Computational Implementation Details}\label{sec:implementation_details}

In this section, we present the computational implementation details of the algorithm as well as other aspects of the computational complexity. First of all, the proposed code for this study was developed using the Python programming language and the PyTorch framework \mbox{\cite{Paszke2019PyTorch:Library}}. Moreover, for the efficient computation of the ANN's partial derivatives in the LipVor algorithm, the NeuralSens package \mbox{\cite{Pizarroso2022NeuralSens:Networks}} was utilized. Additionally, the algorithm and the results obtained in this paper are available in the public repository at \mbox{\href{https://github.com/alejandropolo/LipVor}{https://github.com/alejandropolo/LipVor}}.


For the implementation of the Voronoi, we have leveraged Julia’s HighVoronoi.jl library \cite{HeidaHomeHighVoronoi.jl}, based on \cite{Heida2023OnDiagrams}, for accelerated Voronoi computations. The Voronoi-based verification step, which forms the core of our approach, has a worst-case complexity of $O(n^{2-1/d})$ for $n$ points in a $ d$-dimensional space, though empirical estimates suggest a more favourable complexity closer to $O(n\cdot \log(n))$ \cite{Heida2023OnDiagrams}. Therefore, if $ N$ steps are taken to conclude the LipVor algorithm, the total complexity of the algorithm in the worst-case scenario would be $ O(N \cdot n^{2-1/d}) $. Additionally, since the proposed approach involves verifying whether the Voronoi regions sufficiently cover the input space, it naturally lends itself to parallelization, as the input domain can be divided into smaller $ n $-dimensional cuboids, each processed independently on separate cores. Therefore, the domain can be verified if each of the cuboids is verified. This parallelization strategy, which avoids costly synchronization steps, significantly enhances efficiency.

Furthermore, as previously discussed, we utilize the $L^2$ norm for all norms in this study. 
Lastly, we selected Voronoi diagrams over other tessellation methods because they provide a straightforward mathematical criterion, as outlined in Lemma \mbox{\ref{lem:covering}}, to verify whether the union of open balls fully covers an $n$-dimensional space.

All computational experiments were conducted on a machine equipped with an Intel(R) Core(TM) i7-9750H CPU @ 2.60GHz, featuring 6 physical cores and 12 logical threads, along with 16 GB of RAM.

\section{Case Studies}\label{sec:case_studies}
In the following section, three case studies where partial monotonicity constraints should be enforced are presented. In the first case, an ANN is used to estimate the heat transfer of a 1D bar. As the input domain is 2 dimensional, we can visualize the Voronoi expansion and the partial monotonicity certification using the \hyperref[alg:alg1]{LipVor Algorithm}. Second, a four-dimensional dataset illustrates fairness by ensuring positive attributes are not penalized, highlighting monotonicity’s role in trustworthy predictions. Finally, monotonicity constraints are applied to the AutoMPG dataset, demonstrating the use of LipVor in higher-dimensional spaces. Together, these three experiments constitute real-world scenarios where partial monotonicity aligns with physics, trustworthiness, or expected behavior, respectively.


For model training and evaluation, the dataset is split into training, validation, and testing. First, $20\%$ of the samples are reserved for testing to provide an unbiased assessment. The remaining $80\%$ is further divided $80/20$ into training (model fitting) and validation (hyperparameter tuning and overfitting control). The ANN architecture and hyperparameters, such as learning rate and weight decay, were selected via grid search. The specific parameter ranges are available in the accompanying \mbox{\href{https://github.com/alejandropolo/LipVor}{GitHub repository}}.



\subsection{Case Study: Heat Equation}
Accurately modeling mechanical behavior from observed data is essential for predictive maintenance and digital twin development \cite{Wen2022RecentPerspective}, but capturing complex dynamics remains challenging \cite{Bofill2023ExploringOpportunities}. Traditional methods often fall short, motivating the use of advanced techniques like ANNs \cite{Zhang2019Data-DrivenSurvey}. Yet, ensuring that ANNs comply with physical laws is crucial, as it improves predictive accuracy and preserves fundamental principles \cite{Raissi2019Physics-informedEquations}. To illustrate this, in this case study we consider heat distribution in a 1D rod, showing how a physical property of the system can be encoded as a partial monotonicity constraint.

The evolution of heat in such a rod is governed by the heat equation, a fundamental partial differential equation (PDE) in mathematical physics. In its simpler form, it can be expressed as $\frac{\partial u}{\partial t} = k \cdot \frac{\partial^2 u}{\partial x^2}$, where $u(x,t)$ denotes the temperature as a function of space ($x$) and time ($t$), and $k$ is the thermal diffusivity constant.

In this study, Dirichlet boundary conditions (BC) are imposed to specify the temperature behavior at the rod’s ends, where the boundary temperature increases linearly with time, representing constant heat addition or removal. Hence, the heat equation with the aforementioned time-dependent Dirichlet BC can be described as
\begin{equation}\label{eq:heat_eq}
\left\{
\begin{aligned}
& \frac{\partial u}{\partial t} = k \cdot \frac{\partial^2 u}{\partial x^2}, \quad 0 < x < L,\ t > 0, \\
& u(0,t) = t,\quad u(L,t) = t,\quad u(x,0) = 0.
\end{aligned}
\right.
\end{equation}
Under these BC, the solution of \eqref{eq:heat_eq} is partially monotonic in $t$,\footnote{The linear boundary conditions and the diffusive nature of \eqref{eq:heat_eq} guarantee that $u(x,t)$ increases with $t$.} so enforcing monotonicity matches the expected physical behavior.

Moreover, since collecting physical data is often costly and noisy due to sensor inaccuracies \cite{Bofill2023ExploringOpportunities}, we generated a synthetic dataset to replicate real-world conditions with limited, noisy samples. Specifically, we sampled 30 random measurements of $u(x,t)$ from simulations of \eqref{eq:heat_eq} and added Gaussian noise $\mathcal{N}(0,0.02)$ to each point (Fig.~\ref{fig:heat_data}).

Multiple experiments were conducted using different random seeds, with each seed generating a distinct dataset. For most generated datasets, the unconstrained ANN was successfully certified as partially monotonic after enforcing the penalization term given by Eq. \mbox{\ref{eq:pen_term}}. However, to illustrate a worst-case scenario in which the penalization term alone was insufficient to ensure monotonicity across the entire domain, this case study focuses on a specific corner case where the unconstrained ANN did not converge to a partial monotonic ANN. As illustrated in Figure \mbox{\ref{fig:init_ann}}-(a), the partial derivative of the trained ANN w.r.t. input $t$ is not positive in the whole $\Omega$. Specifically, whenever $t \approx 0$ and $x \in [0.5,0.6]$ the partial derivative is negative. Therefore, as increasing partial monotonicity is equivalent to having positive derivatives, the ANN is not partially monotonic in that subdomain. Consequently, this case study provides an example of an unconstrained ANN that fails to achieve partial monotonicity despite the application of the penalization term.

\begin{figure}[!htbp]
\centering
\includegraphics[width=\linewidth]{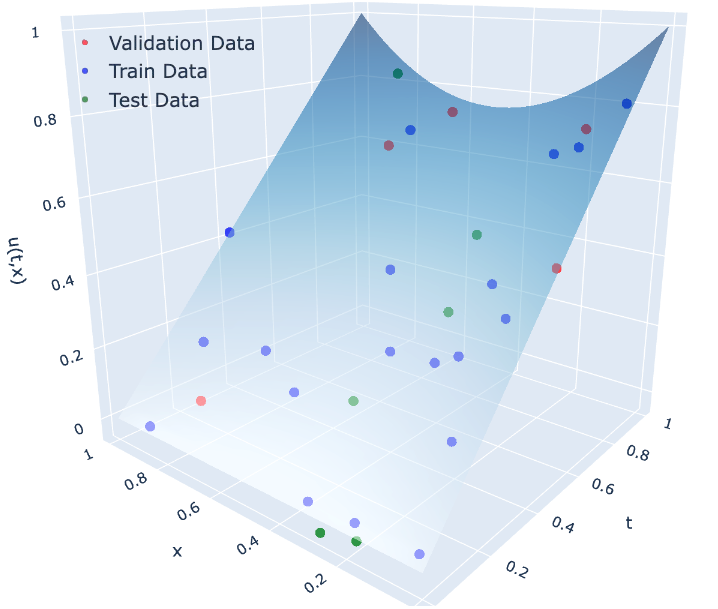}
\caption{Surface of the solution of the heat equation \eqref{eq:heat_eq} and the training (blue), validation (red) and test (green) datasets obtained from the solution with added noise. }
\label{fig:heat_data}
\end{figure}

Considering the ANN's architecture, given the limited number of training samples, a feedforward ANN with 1 hidden layer and 10 neurons with a hyperbolic tangent (tanh) activation function has been considered. Moreover, L-BFGS has been selected as the optimizer with a learning rate of $0.01$. Besides, a penalization of $\lambda = 0.1$ has been imposed to guide the training towards a $\varepsilon$-monotonic solution taking $\varepsilon = 0.1$. The maximum number of epochs for the training process was fixed at 5000 with a patience of $N_p=1000$ epochs for the early stopping.

After the training process is completed, the LipVor Algorithm is used to determine if the ANN is partially $\varepsilon$-monotonic w.r.t. $t$ in $\Omega$. As mentioned before, the partial derivative of the trained ANN w.r.t. input $t$ is not positive in the whole $\Omega$ (Figure \ref{fig:init_ann}-(a)). Therefore, the LipVor Algorithm is expected to be able to effectively find counter-examples in the aforementioned region. 

Following the bound proposed in Theorem \ref{teo:lip_acot}, a Lipschitz estimate of $\hat{L} = 17.14$ is obtained.  Moreover, to start the LipVor Algorithm, ten points from the training dataset are selected to initialize the algorithm, and a maximum number of 800 iterations is fixed (Figure \ref{fig:init_ann}-(b)). As observed in Figure \ref{fig:init_ann}-(c), after 21 iterations the LipVor Algorithm has effectively detected the first counter-example. Moreover, after reaching the maximum number of iterations (Figure \ref{fig:init_ann}-(d)), 77 counter-examples of partial $\varepsilon$-monotonicity have been detected in contrast with the certified area which corresponds to the $71.2\%$ of $\Omega$.  

\begin{figure*}[!htbp]\
\centering     
\subfigure[]{
\includegraphics[width=40mm]{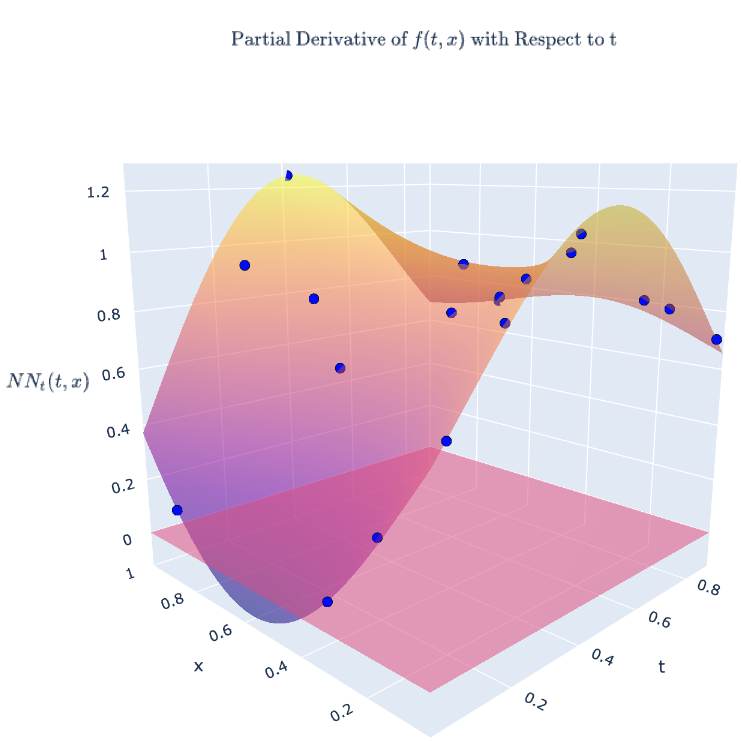}
}
\subfigure[]{
\includegraphics[width=40mm]{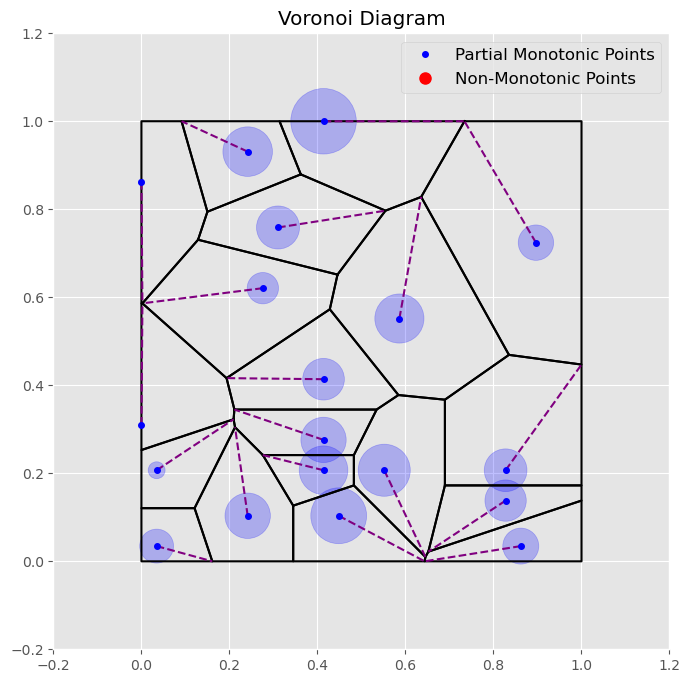}
}
\subfigure[]{
\includegraphics[width=40mm]{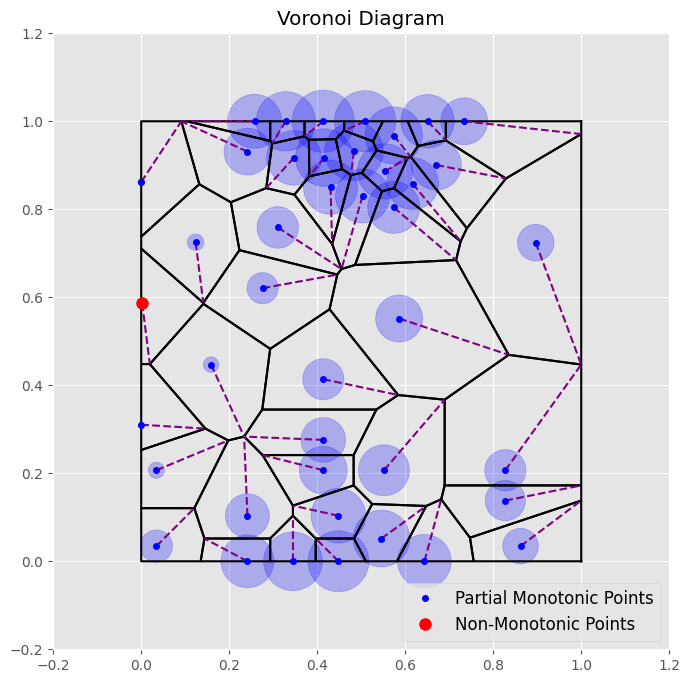}
}
\subfigure[]{
\includegraphics[width=40mm]{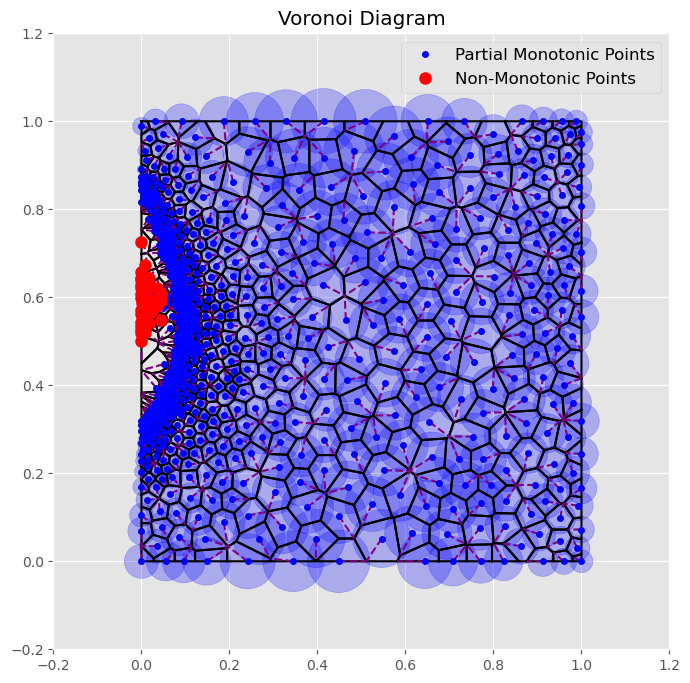}
}
\caption{Evolution of the Voronoi diagram generated by the LipVor Algorithm. (a) Surface plot of the partial derivative of the ANN output w.r.t the input $t$ across the domain. A horizontal plane at $z=0$ is included to help identify regions where the partial derivative is positive or negative.(b) Initialization of the Voronoi diagram using the training set. (c) Voronoi diagram expansion when the first counter-examples is detected by the LipVor Algorithm. (d) Final iteration, after LipVor has reached the maximum number of steps, showing the partial monotonic subdomain and the found counter-examples. }
\label{fig:init_ann}
\end{figure*}

Once the LipVor Algorithm has proved that the initial ANN is not partially $\varepsilon$-monotonic, based on the found counter-examples, the ANN is fine-tuned. The initial ANN has a training mean absolute error (MAE) of $8.81 \times 10^{-3}$ and a test MAE of $1.64 \times 10^{-2}$, with corresponding coefficient of determination $(R^2)$ \cite{Rawlings1998AppliedStatistics} values of 0.9985 and 0.9935. The training $R^2$ value indicates that the model explained approximately $99.85\%$ of the variance in the training data, reflecting an excellent fit, while the test $R^2$ of $0.9935$ still indicates strong performance on unseen data.

After fine-tuning, the ANN showed a training MAE of $1.23 \times 10^{-2}$ and a test MAE of $1.67 \times 10^{-2}$. The training $R^2$ decreased to $0.9967$ but remained high, showing that $99.67\%$ of the training variance is explained, while the test $R^2$ rose to $0.9953$. The detailed results are presented in Table \mbox{\ref{tab:Heat_ANN}}

\begin{table*}[!htbp]
\caption{Training, validation, and test MAE and $R^2$ results for the initial and fine-tuned ANN in case study A\label{tab:Heat_ANN}}
\centering
\begin{tabular}{ | m{1.5cm} | m{1.5cm} | m{1.5cm} | m{1.5cm} | m{1.5cm} | m{1.5cm} | m{1.5cm} | }
    \cline{2-7}
    \multicolumn{1}{c|}{} & \multicolumn{3}{c|}{Initial ANN} & \multicolumn{3}{c|}{Fine-Tuned ANN} \\
    \cline{2-7}
    \multicolumn{1}{c|}{} & Training & Validation & Test & Training & Validation & Test \\
    \hline
    MAE  & $8.81 \times 10^{-3}$ & $2.56 \times 10^{-2}$ & $1.64 \times 10^{-2}$ & $1.23 \times 10^{-2}$ & $3.05 \times 10^{-2}$ & $1.67 \times 10^{-2}$  \\
    \hline
    $R^2$  & 0.9985 & 0.9861 & 0.9935  & 0.9967 & 0.9801 & 0.9953 \\
    \hline
\end{tabular}
\end{table*}

Once the training is completed, the LipVor Algorithm is executed again to certify if the ANN is now partially monotonic. As observed in Figure \ref{fig:ft_ann}, the partial derivative of the ANN w.r.t. the $t$ input is positive in $\Omega$ and the LipVor Algorithm converges after 696 iterations, with the total computation time being 123 seconds\footnote{For visualization purposes in 2D, the Voronoi diagram in this example was generated using \texttt{scipy.spatial.Voronoi} from Python's SciPy library instead of the optimized Julia-based implementation.}. Therefore, the fine-tuned ANN is certified partially monotonic in $\Omega$.

\begin{figure}[!htbp]
\centering     
\subfigure[]{
\includegraphics[width=40mm]{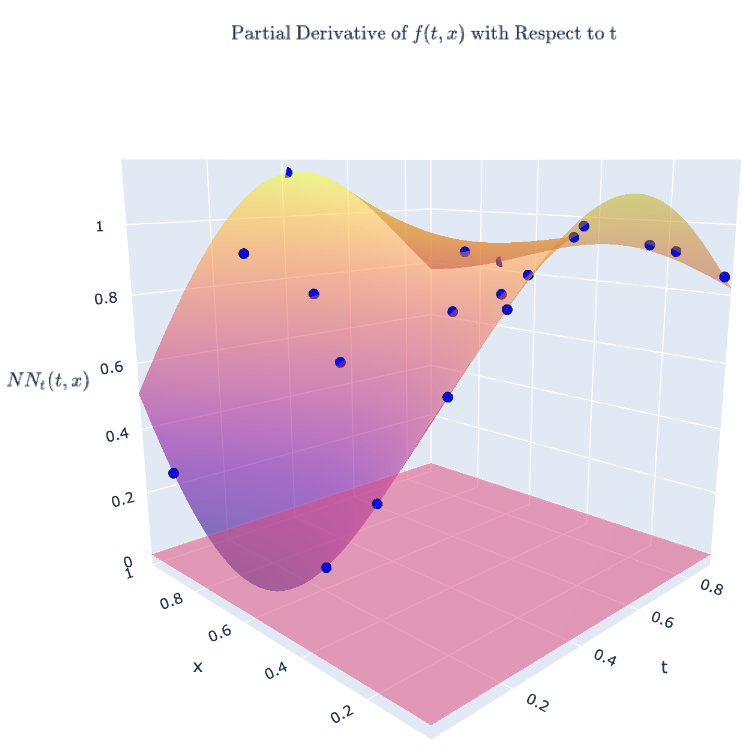}
}
\subfigure[]{
\includegraphics[width=40mm]{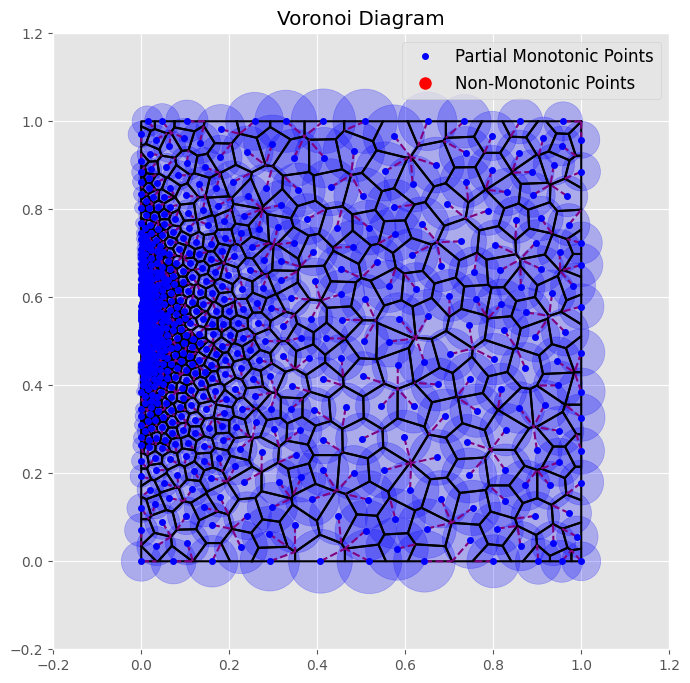}
}
\caption{
Partial monotonicity verification of the ANN using the LipVor Algorithm. (a) Surface plot of the partial derivative of the ANN output w.r.t. input $t$. (b) Voronoi diagram after certifying partial monotonicity.}
\label{fig:ft_ann}
\end{figure}

\subsection{Case Study: Trustworthy Monotonic Predictions for the ESL dataset}

The Employee Selection (ESL) dataset \cite{Ben-David1989LearningConcepts} comprises profiles of candidates applying for specific industrial roles. The four input variables are scores, from 0 up to 9, assigned by expert psychologists based on the psychometric test outcomes and candidate interviews. The output presents an overall score on an ordinal scale from 1 to 9, indicating the extent to which each candidate is suitable for the job. As stated in \cite{Cano2019MonotonicSets}, the ESL dataset is one of the benchmarks in the literature on monotonic datasets. Each four variables are monotonically increasing as a better performance in a psychometric test should be reflected in an increased overall score. Therefore, a model violating the monotonicity constraint would be generating unfair predictions by penalizing a candidate for achieving a higher score on a psychometric test, which is intended to measure their suitability for the job.

The dataset is comprised of four input variables and a total of 488 instances. 
As mentioned before, the four input variables have values ranging in $[0,9]$ and the output variable is comprised in the interval $[1,9]$. Therefore, the input and the output are min-max scaled to transform each variable to $[0,1]$ so the inputs domain is $\Omega = [0,1]^4 \subset \mathbb{R}^4$ and the output domain is $[0,1]$. 

As previously highlighted, one concern of the proposed methodology is the number of executions required to certify the ANN's partial monotonicity. As shown in Proposition \mbox{\ref{prop:rad_mono}}, the radius of certified partial monotonicity is inversely proportional to the Lipschitz constant of the ANN, meaning that a larger Lipschitz constant could reduce the radius and increase the difficulty of ensuring monotonicity. To mitigate this limitation, the inputs and outputs are min-max scaled to the $[0,1]$ range. This normalization, combined with the application of weight decay penalization, helps to constrain the values of the weight matrices within the $[0,1]$ interval, reducing their $L^2$ norm and, consequently, the Lipschitz constant upper bound (Eq. \mbox{\ref{eq:lip_up_bound}}). 

An ANN with an architecture of 2 hidden layers with 5 neurons in each layer is trained. The activation function selected for each neuron is tanh. In this case, an Adam optimizer has been chosen with a learning rate of $0.001$. Moreover, to regularise the ANN, it has been considered a weight decay of $0.005$. The proposed strength of the penalization $\lambda$ is $0.1$ and the ANN is trained considering  $\varepsilon$-monotonicity with $\varepsilon =0.1$. The training process was set to run for a maximum of 5000 epochs, with a patience $N_p=1000$ epochs.  The training process ended after 3987 epochs obtaining the results presented in Table \ref{tab:ESL_ANN} in training, validation and test sets. 

As observed, these results suggest that the model achieved good performance, with the validation and test results reflecting a strong ability to generalize. In particular, an $R^2$ of 0.88129 on the test set indicates that the model explains roughly $88.13\%$ of the variability in the test data, demonstrating a strong fit of the trained model.

\begin{table}[!htbp]
\caption{Training, validation, and test MAE and $R^2$ results for the trained ANN in case study B \label{tab:ESL_ANN}}
\centering
\begin{tabular}{ | m{5em} | m{1.5cm}| m{1.5cm} | m{1.5cm}|}
\hline
& Training & Validation &Test \\
\hline 
MAE  & 0.05594 & 0.04285 & 0.04830  \\
\hline
$R^2$  & 0.82228 & 0.89053 & 0.88129  \\
\hline
\end{tabular}
\end{table}

After the training comes to an end, the LipVor Algorithm is used to check whether the trained ANN is monotonic w.r.t. the four input variables. In this case, the Lipschitz estimate is $\hat{L} = 0.88$ and the LipVor Algorithm is computed starting from 10 random points chosen from the training set. After 548 iterations, executed in 77 seconds, the LipVor Algorithm certifies that the whole input space $\Omega$ is covered and, therefore, the trained ANN is provably monotonic w.r.t. the input variables.

\subsection{Case Study: AutoMPG Dataset}
The Auto MPG dataset \cite{Quinlan1993AutoMPG}, a standard benchmark in monotonic regression \cite{Cano2019MonotonicSets}, encompasses various attributes of automobiles, aiming to predict their fuel efficiency in miles per gallon (MPG). The dataset comprises 398 instances with input features including cylinders, displacement, horsepower, weight, acceleration, model year, and origin, while the output is the MPG. In our experiments, the input and output variables are normalized to the $[0,1]$ interval to ensure consistency and improve training performance.


An ANN with a single hidden layer containing 10 neurons is trained. The activation function selected for each neuron is sigmoid. In this case, an Adam optimizer has been chosen with a learning rate of $0.01$. Moreover, to regularize the ANN, a weight decay of $0.0007$ has been applied. The proposed strength of the penalization $\lambda$ is $0.1$, and the ANN is trained considering $\varepsilon$-monotonicity with $\varepsilon = 0.2$. The training process was set to run for a maximum of 10,000 epochs, with a patience $N_p = 1000$ epochs for early stopping. The training process concluded after 4,881 epochs due to the early stopping criteria being met. The obtained results, presented in Table \ref{tab:AutoMPG_ANN}, show a MAE of 2.19 on the training set, 2.01 on the validation set, and 2.33 on the test set.

\begin{table}[!htbp]  
\caption{Training, validation, and test MAE and $R^2$ results for the trained ANN in case study C (unscaled values) \label{tab:AutoMPG_ANN}}
\centering
\begin{tabular}{ | m{5em} | m{1.5cm}| m{1.5cm} | m{1.5cm} | }
\hline
& Training & Validation & Test \\
\hline 
MAE  & 2.18917 &  2.00626  & 2.33156 \\
\hline
$R^2$  & 0.85431 & 0.87267  &  0.81531 \\
\hline
\end{tabular}%
\end{table}


As observed, these results suggest that the model achieved solid performance, with both the validation and test results demonstrating the model's ability to generalize well. Specifically, the $R^2$ value of 0.81531 on the test set indicates that the model explains approximately $81.53\%$ of the variance in the test data, highlighting a strong fit of the trained model. Additionally, the MAE values show good accuracy in the model’s predictions. Moreover, it is important to note that these results were obtained after unscaling and transforming the output variable back to its original scale.

After training concludes, the LipVor Algorithm is applied to verify whether the trained ANN is decreasingly monotonic with respect to displacement, horsepower and weight. Using a Lipschitz estimate of $\hat{L} = 0.89$, the algorithm is initialized with 1000 random points sampled within the domain $\Omega$. To accelerate verification, the domain is divided into multiple subdomains, each independently verified in parallel. The entire process was completed in approximately 124 seconds, successfully certifying that the ANN is provably monotonic over the input space.
\section{Conclusions}\label{sec:conclusion}
In this article, we propose a novel algorithm (LipVor) that leverages the Lipschitzianity of a black-box model to certify positivity in the whole input domain based on a finite set of positively evaluated points. In particular, as the partial monotonicity of an ANN can be stated as a positivity condition of the ANN's partial derivatives, it is possible to apply the LipVor Algorithm to mathematically certify if an ANN is partially monotonic or find counter-examples. To do so, an upper bound of the Lipschitz constant of the partial derivatives of an ANN is also presented.

The results show that the LipVor Algorithm enables applications in sectors such as banking, where regulators demand trustworthy predictions under partial monotonicity constraints. Beyond monotonicity, other properties like convexity can also be framed as positivity constraints and certified with LipVor. In addition, the methodology may further extend to architectures such as convolutional or normalization layers, while future work includes exploring recurrent networks and transformers. Moreover, we acknowledge that enhancing the computational efficiency of Voronoi diagrams through alternative algorithms is a promising research direction. Since the method may be less efficient in high-dimensional settings, exploring other approaches for space coverage verification is also relevant. Besides, alternative tessellations, such as cubic honeycomb or centroidal Voronoi tessellation, offer further possibilities. Finally, efficiency could also improve by restricting furthest-vertex calculations to newly added points and their neighboring cells, or by using data structures like kd-trees for faster searches.

\appendices
\section{Proof of Theorem \ref{teo:glob_condition}}\label{sec:ap_glob_condition}

This appendix presents a proof of Theorem \ref{teo:glob_condition}, that states a sufficient condition for an L-Lipschitz function $f: \Omega \subseteq \mathbb{R}^n \rightarrow \mathbb{R}$ to be certified positive in $\Omega$. First of all, let us present a technical lemma that will allow us to check if a union of balls $\bigcup_{i \in I}B(p_i, \delta_i)$ covers $\Omega$.

\begin{lemma}\label{lem:covering}
    Let $\Omega \subset \mathbb{R}^n$ be a compact domain and $\mathcal{P}= \{p_1,p_2,\dots,p_k\}$ a set of points  defining a Voronoi diagram with cells $(R_i)_{i\in I}$, with $I=\{1,\dots,k\}$, if $\forall \, p_j \in \mathcal{P}$ it is verified that 
    \(
        \max_{x \in R_j} d(x,p_j) < \delta_j , 
    \)
    for some $\delta_j$ with $j \in I$, then 
    \(
          \Omega\subseteq\bigcup_{i \in I} B(p_i, \delta_i) .
    \)
\end{lemma}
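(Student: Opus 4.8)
## Proof Plan for Lemma (Voronoi Covering)

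The plan is to show directly that every point of $\Omega$ lies in one of the balls $B(p_i, \delta_i)$, by exploiting the defining property of the Voronoi diagram: the cells $(R_i)_{i \in I}$ cover $\Omega$ (indeed, they cover all of $\mathbb{R}^n$), so any $x \in \Omega$ belongs to at least one cell $R_j$. The key observation is then that membership in $R_j$ controls the distance from $x$ to the site $p_j$ via the hypothesis.

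\textbf{Step 1: Reduce to a single cell.} Let $x \in \Omega$ be arbitrary. Since the Voronoi cells satisfy $\mathbb{R}^n = \bigcup_{i \in I} R_i$ (every point is closest to \emph{some} site, ties being resolved by the $\leq$ in the definition of $R_i$), and $\Omega \subseteq \mathbb{R}^n$, there exists an index $j \in I$ with $x \in R_j$. It suffices to prove $x \in B(p_j, \delta_j)$, i.e., $\|x - p_j\| < \delta_j$.

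\textbf{Step 2: Bound the distance using the hypothesis.} Because $x \in R_j$, we have
\begin{equation*}
    d(x, p_j) = \|x - p_j\| \leq \max_{y \in R_j} d(y, p_j) < \delta_j,
\end{equation*}
where the first inequality holds since $x$ is one competitor in the maximum over $R_j$ (note the maximum is attained because $R_j \cap \Omega$, or $R_j$ itself if bounded, is compact — in the algorithm's setting $R_j$ is a bounded convex polytope), and the strict inequality is exactly the assumed hypothesis for the index $j$. Hence $x \in B(p_j, \delta_j) \subseteq \bigcup_{i \in I} B(p_i, \delta_i)$. Since $x \in \Omega$ was arbitrary, $\Omega \subseteq \bigcup_{i \in I} B(p_i, \delta_i)$, as claimed.

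\textbf{Main obstacle.} There is no deep obstacle here; the only subtlety worth addressing carefully is the well-definedness of $\max_{x \in R_j} d(x, p_j)$ — one should confirm that the relevant region is compact (a bounded Voronoi cell, or its intersection with the compact $\Omega$), so that the continuous function $d(\cdot, p_j)$ attains its maximum and the hypothesis is meaningful. If one works with $R_j \cap \Omega$ rather than $R_j$, Step 1 is unchanged and Step 2 goes through verbatim with $R_j$ replaced by $R_j \cap \Omega$. Beyond that, the argument is a direct unwinding of the definitions of Voronoi cell and open ball.
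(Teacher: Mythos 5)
Your proof is correct and uses essentially the same argument as the paper: every point of $\Omega$ lies in some Voronoi cell $R_j$, and the hypothesis bounds $d(x,p_j)$ by $\max_{y\in R_j} d(y,p_j) < \delta_j$, placing $x$ in $B(p_j,\delta_j)$. The paper phrases this as a proof by contradiction while you argue directly (and you also note the compactness point needed for the maximum to be attained, which the paper leaves implicit), but the underlying reasoning is identical.
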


\begin{proof}
Let us prove the lemma by contradiction. Let us suppose that there exists $x^* \in \Omega$ such that $x^* \not\in \bigcup_{i \in I} B(p_i, \delta_i)$. By definition of the Voronoi diagram in a compact space $\Omega$, there exists at least one $p_j \in \mathcal{P}$ such that $x^* \in R_j$. On the other hand, as $x^* \not\in  \bigcup_{i \in I} B(p_i, \delta_i)$, then $d(x^*,p_j)>\delta_j$. Therefore
\(
d(x^*,p_j)>\delta_j> \max_{x \in R_j} d(x,p_j),
\)
which is a contradiction with the initial supposition. 
\end{proof}

Recall that Proposition \ref{prop:rad_mono} states that if $f: \Omega \subset \mathbb{R}^n \rightarrow \mathbb{R}$ is an L-Lipschitz function such that $f(x_0)>0$, with $x_0 \in \Omega$, then there exists a radius $\delta_0 = \frac{f(x_0)}{L}$ of certified positivity verifying that $f(x)>0, \; \forall \; x \in B(x_0,\delta_0)$. Therefore, given a set of points $\mathcal{P} = \{p_1, p_2, \dots, p_k\}$ and an L-Lipschitz function $f$, the previous lemma can be used to state a sufficient condition for $f$ to be certified positive based on the evaluations at $\mathcal{P}$. Specifically, it suffices to check if the furthest point from each Voronoi cell to the point defining the cell, is smaller than the radius of certified positivity given by Proposition \ref{prop:rad_mono}.

\begin{theorem}\label{teo:ap_glob_condition}
    Let $\Omega \subset \mathbb{R}^n$ be a compact domain, $\mathcal{P} = \{p_1, p_2, \dots, p_k\}$ a set of points contained in $\Omega$ and $V(\mathcal{P})=(R_j)_{j\in I}$ be the Voronoi diagram of $\mathcal{P}$. Let $f: \Omega \rightarrow \mathbb{R}$, with $f \in C^{0,1}(\Omega)$, an L-Lipschitz function and let $\delta_j =  \frac{f(p_j)}{L},\; \forall j \in I = \{1,2,\dots,k\},$ be the radius of certified positivity given by Proposition \ref{prop:rad_mono}. Then if 
    \begin{equation}\label{eq:ap_suff_condition}
        \max_{x \in R_j} d(x,p_j) < \delta_j \; \forall j \in I
    \end{equation}
    the function $f$ is positive in $\Omega$.
\end{theorem}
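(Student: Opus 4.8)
The plan is to combine the local positivity radius from Proposition \ref{prop:rad_mono} with the covering statement of Lemma \ref{lem:covering}. The key observation is that the hypothesis \eqref{eq:ap_suff_condition} is precisely the hypothesis of Lemma \ref{lem:covering} with the $\delta_j$ taken to be the radii of certified positivity, so that lemma immediately gives the inclusion $\Omega \subseteq \bigcup_{i \in I} B(p_i, \delta_i)$. It then remains to convert this covering into pointwise positivity of $f$ on all of $\Omega$.

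First I would fix an arbitrary $x \in \Omega$. Since \eqref{eq:ap_suff_condition} holds for every $j \in I$, Lemma \ref{lem:covering} applies and yields $\Omega \subseteq \bigcup_{i \in I} B(p_i, \delta_i)$; in particular there is some index $i \in I$ with $x \in B(p_i, \delta_i)$, i.e. $\|x - p_i\| < \delta_i = \frac{f(p_i)}{L}$. Note that this forces $f(p_i) > 0$ (otherwise $\delta_i \le 0$ and the open ball is empty), so Proposition \ref{prop:rad_mono} is applicable at the point $p_i$: it guarantees $f(y) > 0$ for every $y \in B(p_i, \delta_i)$. Taking $y = x$ gives $f(x) > 0$. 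Since $x \in \Omega$ was arbitrary, $f$ is positive on all of $\Omega$, which is the claim.

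I do not expect any serious obstacle here: the theorem is essentially an assembly of two already-established facts, and the only subtlety worth spelling out is the remark that $f(p_i) > 0$ must hold at the covering site (which is automatic from $x$ lying in a nonempty ball $B(p_i,\delta_i)$), so that the Lipschitz-based local certificate of Proposition \ref{prop:rad_mono} may legitimately be invoked there. One could also present the argument contrapositively — if $f(x) \le 0$ at some $x$, then $x$ lies in no ball $B(p_i,\delta_i)$ of certified positivity, contradicting the covering — but the direct version above is cleaner. A minor bookkeeping point: the maximum $\max_{x \in R_j} d(x,p_j)$ in \eqref{eq:ap_suff_condition} is attained because each $R_j$ is a closed convex polytope intersected with the compact $\Omega$ and $d(\cdot, p_j)$ is continuous, but this is already implicit in the statement and needs no further comment.
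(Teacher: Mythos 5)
Your proposal is correct and follows essentially the same route as the paper's proof: apply Lemma \ref{lem:covering} to obtain the covering $\Omega \subseteq \bigcup_{i \in I} B(p_i,\delta_i)$ and then invoke Proposition \ref{prop:rad_mono} at the covering site to conclude $f(x)>0$ for an arbitrary $x\in\Omega$. Your added remark that $f(p_i)>0$ is forced (so that Proposition \ref{prop:rad_mono} is legitimately applicable) is a small point the paper leaves implicit, but it does not change the argument.
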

\begin{proof}
    Let $x$ be a point in the compact domain $\Omega$ and suppose that $f$ verifies Eq. \eqref{eq:ap_suff_condition}. Therefore, by Lemma \ref{lem:covering}, 
    \(
        \Omega\subseteq\bigcup_{i \in I} B(p_i, \delta_i).
    \)
    Consequently, by definition of the Voronoi diagram, there exists a Voronoi cell $R_j$ such that $x \in R_j \subseteq B(p_j,\delta_j)$. Therefore, by Proposition \ref{prop:rad_mono}, $f(x) > 0$. Consequently, f is positive in the whole compact domain $\Omega$.
\end{proof}

\section{Proof of Theorem \ref{teo:convergence}}\label{sec:ap_bound_iterations}
Recall that the \hyperref[alg:alg1]{LipVor Algorithm} provides a mathematical certification to check if a function $f$ is positive in a compact domain $\Omega$. As mentioned in section \ref{subsec:lipvor_algo}, one aspect worth mentioning is the finiteness of the LipVor Algorithm. Therefore,  this appendix provides a proof of Theorem \ref{teo:convergence}, which states that the LipVor Algorithm reaches a conclusion in a finite number of steps and gives an upper bound of the maximum number of iterations needed.

Let us start by recalling the definition of $\varepsilon$-positivity that will be later used in Theorem \ref{teo:ap_convergence}.
Let $f: \Omega \subseteq\mathbb{R}^n \rightarrow \mathbb{R}$ be a function in a domain $\Omega$, then the function $f$ is said to be $\varepsilon$-positive, with $\varepsilon >0 $, if
\(
    f(x) \geq \varepsilon, \forall \; x \in \Omega.
\)
Although $\varepsilon$-positivity might seem like a more restrictive condition than positivity, the following lemma will prove that every positive function in a compact domain $\Omega$ is indeed $\varepsilon$-positive for some $\varepsilon$. Moreover, it is trivial that an $\varepsilon$-positive function is positive. Therefore, for continuous functions in compact domains, $\varepsilon$-positivity is equivalent to positivity.

\begin{lemma}\label{lem:eps_strict_pos}
    Let $f: \Omega \subseteq\mathbb{R}^n \rightarrow \mathbb{R}$ be a positive continuous function in a compact domain $\Omega$. Then, there exists an $\varepsilon_{f,\Omega} >0$ such that
    \(
        f(x) \geq \varepsilon_{f,\Omega}, \forall \; x \in \Omega.
    \)
\end{lemma}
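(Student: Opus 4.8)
The plan is to use the extreme value theorem. Since $\Omega$ is compact and $f$ is continuous on $\Omega$, the image $f(\Omega)$ is a compact subset of $\mathbb{R}$, so $f$ attains its minimum at some point $x_0 \in \Omega$. That is, there exists $x_0 \in \Omega$ with $f(x_0) \leq f(x)$ for all $x \in \Omega$.

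First I would invoke the Weierstrass extreme value theorem to produce the minimizer $x_0$. Then I would set $\varepsilon_{f,\Omega} := f(x_0) = \min_{x \in \Omega} f(x)$. Since $f$ is assumed positive on $\Omega$, in particular $f(x_0) > 0$, so $\varepsilon_{f,\Omega} > 0$. Finally, by the defining property of the minimum, $f(x) \geq f(x_0) = \varepsilon_{f,\Omega}$ for every $x \in \Omega$, which is exactly the desired conclusion.

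There is essentially no main obstacle here: the only subtlety is making sure the hypotheses of the extreme value theorem are actually in force, namely that $\Omega$ is compact (given) and that $f$ is continuous on all of $\Omega$ (given, since $f$ is positive \emph{continuous}). One might also remark, for completeness, that this $\varepsilon_{f,\Omega}$ is in fact the largest constant for which the $\varepsilon$-positivity inequality holds, though the statement only asks for existence. I would keep the proof to a few lines.
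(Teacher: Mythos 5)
Your proof is correct and follows essentially the same route as the paper: the paper also argues that $f(\Omega)$ is compact, hence closed and bounded, so $f$ attains a minimum $\varepsilon_{f,\Omega}=\min f(\Omega)>0$, which gives the bound. Invoking the Weierstrass extreme value theorem directly is just a packaged form of that same argument.
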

\begin{proof}
    By continuity of $f$, the image of the compact domain $\Omega$ is again a compact space in $\mathbb{R}$. Therefore,  by the Heine-Borel Theorem \cite{Rudin1976PrinciplesMathematics}, $f(\Omega)$ is closed and bounded. Hence, f attains its minimum
    \(
        \varepsilon_{f,\Omega} = \min\left( f(\Omega)\right),
    \) in $f(\Omega)$. Besides, as f is positive, then  $\varepsilon_{f,\Omega} >0$. Thus, by definition of the minimum of a set
    \(
        f(x) \geq \varepsilon_{f,\Omega}, \forall \; x \in \Omega.
    \)
\end{proof}
Consequently, considering $\varepsilon$-positive functions is just a formal requirement of Theorem \ref{teo:ap_convergence} that can be easily translated to positivity of a function $f$ in $\Omega$ by just decreasing $\varepsilon$. 
  
\begin{theorem}\label{teo:ap_convergence}
    Let $f:\Omega \subseteq\mathbb{R}^n \rightarrow \mathbb{R} $ be an L-Lipschitz function, $f \in C^{0,1}(\Omega)$ where $\Omega$ is a compact domain on which the positivity wants to be certified.
    Then the maximum number of iterations $N$ needed by the \hyperref[alg:alg1]{LipVor Algorithm} to fill $\Omega$, and therefore to certify positivity, or to find a non-$\varepsilon$-positive counter-example is upper bounded by 
    \begin{equation} \label{eq:ap_max_iter_bound}
    N \leq \frac{\Vol(\bar{\Omega})}{\Vol \left(B\left(\frac{\varepsilon}{2L}\right)\right)}, 
    \end{equation}
    where $B\left(\frac{\varepsilon}{2L}\right)$ is the ball of radius $\frac{\varepsilon}{2L}$ and $\bar{\Omega}$ is the Minkowski sum  of the domain $\Omega$ and $B\left(\frac{\varepsilon}{2L}\right)$ .
\end{theorem}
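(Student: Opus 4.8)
The plan is to show that the points selected by the \hyperref[alg:alg1]{LipVor Algorithm} cannot be packed too densely, which forces termination in a bounded number of steps. The key structural observation is the following. At any iteration, if the algorithm has not yet stopped, it selects a vertex $v_j$ of the current Voronoi diagram whose parent $p_j$ satisfies $\delta_j = \frac{f(p_j)}{L} \le d(p_j, v_j)$, and this selected vertex $\textit{nVertex}$ is added to $\mathcal{P}$. Crucially, $v_j$ is a Voronoi vertex, so it is equidistant from $p_j$ and from its other generating sites, and it lies at distance $\ge \delta_j \ge \frac{\varepsilon}{2L}$ from \emph{every} point currently in $\mathcal{P}$ — here I would use that, as long as no counter-example has been found, $f(p_i) \ge \varepsilon$ for all $p_i \in \mathcal{P}$, hence each $\delta_i \ge \frac{\varepsilon}{2L}$, and that $v_j$ is not covered by any adjacent ball $B(p_i,\delta_i)$ means in particular $d(p_i, v_j) \ge \delta_i \ge \frac{\varepsilon}{2L}$. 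So every newly added point is at distance at least $\frac{\varepsilon}{2L}$ from all previously added points.

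Given that separation property, I would argue by a volume (packing) bound. First I would set $\rho = \frac{\varepsilon}{2L}$ and note that all points ever added lie in $\Omega$, and pairwise distances between them (and between them and the original points of $\mathcal{P}$, which I may also assume $\rho$-separated or simply absorb into the count) are at least $\rho$. Then the open balls $B(p_i, \rho/2)$ of radius $\rho/2$ centered at these points are pairwise disjoint, and each is contained in the Minkowski sum $\bar{\Omega} = \Omega + B(\rho/2)$ — actually, to match the statement's $B(\varepsilon/2L)$ I would instead take balls $B(p_i,\rho)$, use that their \emph{half-radius} versions are disjoint, and compare total volume; either normalization gives a bound of the same shape. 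Summing volumes, the number $k$ of such points satisfies $k \cdot \Vol(B(\rho/2)) \le \Vol(\bar{\Omega})$, and since $\Vol(B(\rho/2)) = 2^{-n}\Vol(B(\rho))$, after adjusting the definition of $\bar\Omega$ by the extension radius one obtains exactly
\begin{equation*}
    N \le \frac{\Vol(\bar{\Omega})}{\Vol\!\left(B\!\left(\frac{\varepsilon}{2L}\right)\right)}.
\end{equation*}
Once $N$ points have accumulated, no further point can be added without violating the separation property, so the algorithm must have already returned — either with a counter-example, or with \textit{isPositive} confirmed via Eq. \eqref{eq:glob_condition} (by Theorem \ref{teo:glob_condition}), or with $\textit{counter} = \text{length}(\mathcal{P})$, which by the stopping logic also signals that every Voronoi cell is covered. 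Finally I would invoke Lemma \ref{lem:eps_strict_pos} to note that choosing $\varepsilon$ small enough makes $\varepsilon$-positivity equivalent to positivity, so the finiteness statement applies to the genuine positivity-certification problem.

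The main obstacle I anticipate is making the separation argument fully rigorous: one must verify carefully that the vertex $\textit{nVertex}$ actually selected by the pseudocode — after the filtering to $\mathcal{V}_p$ and the argmin/argmax tie-breaking over $\delta_j$ and $n_j$ — is genuinely $\frac{\varepsilon}{2L}$-far from all current sites, including in the modified variant that also expands around counter-example parents (there $f(p_j) < \varepsilon$, so the $\delta_j \ge \frac{\varepsilon}{2L}$ bound fails and a separate, smaller-scale argument, or simply noting that the algorithm terminates upon such a $p_j$, is needed). A secondary technical point is the containment $B(p_i,\rho/2) \subseteq \bar\Omega$ for points $p_i$ on or near $\partial\Omega$, which is precisely why the domain must be extended by the Minkowski sum before taking volumes; I would state this containment explicitly and cite \cite{Schneider2013ConvexTheory} for the basic properties of the Minkowski sum. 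The closing bound in terms of the gamma function is then just the explicit formula for the volume of a Euclidean ball together with a crude bound $\Vol(\bar\Omega) \le \Vol(B(a/2 + \rho))$ for a suitable $a$ depending on $\operatorname{diam}(\Omega)$, which I would relegate to a one-line remark.
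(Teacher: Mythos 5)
Your proposal follows essentially the same route as the paper: a packing argument showing that the sites accumulated by LipVor are pairwise separated, so that disjoint balls centered at them fit inside the Minkowski-extended domain $\bar{\Omega}$, and a volume comparison then bounds $N$. The paper's proof is exactly this, carried out by observing that every retained site satisfies $f(p_i)\geq\varepsilon$, that every uncovered furthest vertex $v_i$ satisfies $d(p_j,v_i)\geq \delta_i$ for all $j$ by the defining property of the Voronoi cell, and that every point added by the algorithm was itself such a vertex at the time it was added.

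There is, however, one concrete quantitative slip you should fix. You state the separation as $d(p_i,v_j)\geq\delta_i\geq\frac{\varepsilon}{2L}$ and then take $\rho=\frac{\varepsilon}{2L}$ as the minimal pairwise distance, packing disjoint balls of radius $\rho/2=\frac{\varepsilon}{4L}$. That yields $N\leq \Vol(\bar{\Omega})/\Vol\left(B\left(\frac{\varepsilon}{4L}\right)\right)$, which is $2^n$ times larger than the bound in Eq.~\eqref{eq:ap_max_iter_bound}; no redefinition of $\bar{\Omega}$ by the extension radius can absorb a $2^n$ factor sitting in the denominator. The correct observation is that $\delta_i=\frac{f(p_i)}{L}\geq\frac{\varepsilon}{L}$ (not $\frac{\varepsilon}{2L}$), so the sites and uncovered vertices are pairwise at distance at least $\frac{\varepsilon}{L}$; the disjoint balls then have radius $\frac{\varepsilon}{2L}$, they lie in $\bar{\Omega}=\Omega+B\left(\frac{\varepsilon}{2L}\right)$, and the volume comparison gives exactly the stated bound. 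Your two anticipated obstacles are real but minor: the paper handles the counter-example case simply by terminating upon finding $f(p_j)<\varepsilon$ (so no separation estimate is needed for such parents in the version being proved), and the boundary containment is precisely the reason $\bar{\Omega}$ is the Minkowski extension by $\frac{\varepsilon}{2L}$.
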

\begin{proof}
Let us suppose that, after executing the LipVor Algorithm for $N-1$ iterations, the sufficient condition stated in Theorem \ref{teo:ap_glob_condition} is not fulfilled and therefore $f$ is not yet certified positive in $\Omega$. Therefore, there are $N-1$ points $\mathcal{P}=\{p_1,\dots,p_{N-1}\}\subseteq \Omega$ selected by LipVor such that $f(p_i) \geq \varepsilon, \forall \, i \in I= \{1,2,\dots,N-1\}$ (otherwise there would be already a counter-example of $\varepsilon$-positivity) but not verifying Eq. \eqref{eq:ap_suff_condition}. 

By Proposition \ref{prop:rad_mono}, for every $p_i \in \mathcal{P}$ there exists a  radius $\delta_i$ of positivity verifying
\(
    \delta_i = \frac{f (p_i)}{L} \geq \frac{\varepsilon}{L},
\)
where L is the Lipschitz constant of $f$. Moreover, considering $(R_i)_{i\in I}$ the Voronoi cells generated by $\mathcal{P}$ in $\Omega$, let 
\(
    \tilde{\mathcal{P}} = \{ p_i \in \mathcal{P} \mid  \max_{x \in R_i} d(x,p_i) \geq \delta_i\}
\)
be the subset of points of $\mathcal{P}$ such that the radius of certified positivity is not enough to cover its corresponding Voronoi cell.
Then, by hypothesis, $\tilde{\mathcal{P}} \not = \emptyset$ as Eq. \eqref{eq:ap_suff_condition} is not fulfilled. Therefore, for every $p_i \in \tilde{\mathcal{P}}$, it is verified that 
\(
     \max_{x \in R_i} d(x,p_i) \geq \frac{f(p_i)}{L} \geq \frac{\varepsilon}{L}.
\)
Consequently, for any point $p_i \in \tilde{\mathcal{P}}$ the distance to its furthest vertex $v_i$ verifies that  $d(p_i,v_i)\geq \frac{\varepsilon}{L}$. Therefore $d(p_j,v_i) \geq \frac{\varepsilon}{L}$, $\forall \, j  \in  I$ by definition of the Voronoi diagram. Moreover, as every $p_i \in \mathcal{P}$ has been selected using the LipVor Algorithm, in particular $d(p_i,p_j) \geq \frac{\varepsilon}{L}, \forall \; i\not = j$ for the same reason explained for the case of $v_i$.

Taking this into account, there are N disjoint open balls of radius $\delta = \frac{\varepsilon}{2L}$ centered at the set of points $\mathcal{P} \cup \{v_i\}$ such that 
\(
    B(p_i,\delta) \cap B(p_j,\delta) = \emptyset, \forall \, i,j \in I, i\not=j,
\)
and 
\(
    \bigcup_{i \in I} B(p_i, \delta) \subset \bar{\Omega},
\)
where $\bar{\Omega}$ is the domain $\Omega$ extended by $\frac{\varepsilon}{2L}$ given by 
\(
\bar{\Omega}= \Omega + B \left(x, \frac{\varepsilon}{2L}\right) = \bigcup_{x\in \Omega} B \left(x, \frac{\varepsilon}{2L}\right) .
\)
with $+$ representing the Minkowski sum.
Therefore,
\(
     \Vol \left(\bigcup_{i \in I} B(x_i, \delta) \right) = N \cdot \Vol \left(B\left(x_i, \frac{\varepsilon}{2L}\right)\right) \leq \Vol(\bar{\Omega}),
\)
which implies that
\(
    N \leq \frac{\Vol(\bar{\Omega})}{\Vol \left(B\left( \frac{\varepsilon}{2L}\right)\right)}.
\)
\end{proof}
Lastly, if $\Omega \subseteq \mathbb{R}^n$ is a compact domain, then by the Heine-Borel Theorem \cite{Rudin1976PrinciplesMathematics}, $\Omega$ is closed and bounded. Therefore, up to translation, there exists an n-dimensional hyperrectangle $H^n = [0,a_1] \times [0,a_2] \times \dots \times [0,a_n]$ with $a_i >0, \forall \; 1 \leq i \leq n$ such that $\Omega \subseteq H^n$. Consequently, 
\(
    \Vol (\bar{\Omega}) \leq \Vol (H^n) = \prod_{i=1}^{n} \left(a_i + \frac{\varepsilon}{L}\right)^n.  
\)
Therefore, as the volume of an n-dimensional ball is given by
\(
    \Vol(B(R)) = \frac{\pi^{\frac{n}{2}}\cdot R^n}{\Gamma \left(\frac{n}{2}+1\right)},
\)
where $\Gamma$ is the gamma function, then
\begin{align*}
    N &\leq \frac{\Vol(\bar{\Omega})}{\Vol \left(B\left( \frac{\varepsilon}{2L}\right)\right)}
    \leq
    \frac{\left(a_i + \frac{\varepsilon}{L}\right)^n}{\left( \frac{\pi^{\frac{n}{2}}\cdot \left( \frac{\varepsilon}{2L}\right)^n}{\Gamma \left(\frac{n}{2}+1\right)} \right)} \\
    &\leq
    \left(\frac{2\cdot \left(a\cdot L + \varepsilon\right)}{\pi^{\frac{1}{2}}\cdot\varepsilon}\right)^n \Gamma \left(\frac{n}{2}+1\right),
\end{align*}
considering $a = \max_{1\leq i \leq n} \{a_i \}$. Consequently, the LipVor algorithm clearly concludes in a finite number of steps.

\section{Upper Bound of the Lipschitz Constant of an ANN's Partial Derivative}\label{lips_appendix}
Given the tensor formulation of an ANN proposed in \cite{Gonzalo2023ExplainableNetworks}, the Jacobian matrix and Hessian tensor can be described in terms of the weight tensors and the Jacobians and Hessians of its layers. We will use this description to compute the necessary upper bounds for the Hessian. Therefore, the first step will be to describe the aforementioned formulation. 

Recall that an ANN $g:\mathbb{R}^n \rightarrow \mathbb{R}$ can be described as a composition of linear transformations with activation functions such that
\begin{align}
    o^0 &= \mathbf{x}, \nonumber \\
    z^l &= o^{l-1} \cdot W^l + b^l \quad \text{for } l = 1, 2, \ldots, K,  \label{eq:ap_z_l}\\\
o^l &= \varphi^l \left( z^l \right), \quad \text{for } l = 1, 2, \ldots, K,  \nonumber\\
    y &= g(\mathbf{x}; \mathbf{W}, \mathbf{b}) =  \varphi^K \left( \mathbf o^{K-1} \cdot W^K + b^K \right) \nonumber,
\end{align}
where $\mathbf{x}$ represents the input data, $W^l$ represent the weights, $b^l$ the bias and $\varphi^l$ the activation function of the $l^{th}$ layer. Therefore, under the aforementioned notation, the 0-layer (input layer) can be considered as a linear transformation where $W^0=I_{n}$, $b_0=\mathbf{0}_n$ and $\varphi^0 (\mathbf{x}) = \mathbf{x}$. Moreover, following Eq. \eqref{eq:ap_z_l}, the weight matrix of the $l^{th}$ layer is given by 
\(
    W^l = \left(\frac{\partial z^l_k}{\partial o^{l-1}_j}\right)_{j,k} \in \mathbb{R}^{n^{l-1}}\times\mathbb{R}^{n^l}, 0\leq j \leq n^{l-1},  \leq k \leq{n^l}.
\)

According to the methodology proposed in \cite{Gonzalo2023ExplainableNetworks}, the Jacobian matrix and the Hessian tensor of the $l^{th}$ layer w.r.t. the inputs in the $p^{th}$ layer will be denoted by $J_p^l$ and $H_p^l$ respectively. Consequently, given an ANN with K layers, the Jacobian matrix $J_0^l \in \mathbb{R}^{n^0}\times\mathbb{R}^{n^l}$ of the $l^{th}$ layer with $n^l$ neurons, with $0<l\leq K$,  w.r.t. the $n^0=n$ inputs is given by
\begin{equation}\label{eq:ap_jac_tensor}
    J_0^l = J_0^{l-1} \cdot W^l \cdot J_l^l,
\end{equation}
where $J_l^l = \left(\frac{\partial \varphi^l_j(z^l)}{\partial z^l_i}\right)_{i,j}$, with $1\leq i,j \leq n^l$, is the Jacobian matrix of the output of the $l^{th}$ layer w.r.t. $z^l$. 

On the other hand, the Hessian of the $l^{th}$ layer w.r.t. the inputs can be equally stated in its tensor form as
\begin{multline}\label{eq:ap_hess_tensor}
    H_0^l = \left(J_0^{l-1}\cdot W^l\right) \otimes_i \left(J_0^{l-1}\cdot W^l\right)  \otimes_j H_l^l \\+ H_0^{l-1}\otimes_k (W^k \cdot J_l^l),
\end{multline}
where $H_l^l =\left(\frac{\partial \varphi^l(z^l_k)}{\partial z^l_j \partial z^l_i}\right)_{i,j,k} \in \mathbb{R}^{n^l}\times\mathbb{R}^{n^l}\times\mathbb{R}^{n^l}$ is the 3D Hessian tensor of the output of the $l^{th}$ layer w.r.t. the input of that layer. Moreover, $\otimes_a$ is the tensor multiplication of an $n$-dimensional tensor by a matrix along each of the layers of the $a$ axis 
\cite[page 50]{Gonzalo2023ExplainableNetworks}.
For instance, following the Einstein notation, considering a 3D tensor $T = \left(T_{ijk}\right)$ and a matrix $M =\left(M_{lm}\right) $, then $\otimes_i$ is the multiplication along the layers of the $i$-axis of $T$, given by 
\(
    (T\otimes_i M)_{ijk}=\sum_t T_{ijt}M_{tk}, \qquad  (M\otimes_i T)_{ijk}=\sum_t M_{jt}T_{itk}.
\)
Under the aforementioned notation, it can be stated the following theorem that establishes an upper bound for the Hessian of an ANN and therefore an upper estimate $\hat{L}$ of the Lipschitz constant $L$.
\begin{theorem}\label{teo:ap_lip_acot}
        Let $g:\Omega \subseteq\mathbb{R}^n \rightarrow \mathbb{R}$ be an ANN with K-layers, then the Lipschitz constant $ L_{x_r}^K$ of the $r^{th}$ partial derivative $g_{x_r}$ of an ANN verifies that
    \begin{multline*}
        L_{x_r}^K \leq \hat{L}_{x_r}^K := \max|a_k^K|\cdot\lVert W_{1j}^1\rVert \cdot\lVert W^1\rVert\cdot\lVert W^2\rVert ^2\cdot\, \dots \,\cdot\lVert W^K\rVert ^2\\ +\hat{L}_{x_r}^{K-1} 
        \cdot\lVert W^K \rVert,
    \end{multline*}
    where $W^k$ is the weight matrix of the $k^{th}$ layer, $W_{1j}^1$ is the first row of the weight matrix $W^1$, $\hat{L}_{x_r}^{K-1}$ is the upper bound of Lipschitz constant of the $r^{th}$ partial derivative of the $K-1$ layer of the ANN, $a_k^K = \|H_K^K\|_{\infty}:= \max_{1\leq i\leq n}\left(\sum_{j=1}^n \left(H_K^K\right)_{i,j}\right)$ and rest of the matrix norms are $L^2$ norms.
\end{theorem}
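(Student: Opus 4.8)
The natural strategy is an induction on the number of layers $K$, mirroring the recursive tensor identity \eqref{eq:ap_hess_tensor} for the Hessian $H_0^l$. The quantity I actually want to bound is $L_{x_r}^K = \sup_{x\in\Omega}\lVert \tilde{H}_0^K(x)\rVert$, i.e. the norm of the row of the Hessian tensor $H_0^K$ indexed by the $r^{th}$ input. Since $\tilde{H}_0^K$ is obtained from $H_0^K$ by fixing one of the two ``input'' indices to $r$, it suffices to bound $\lVert H_0^K\rVert$ slicewise and then specialise to the $r$-slice. I would first record the base case: for the trivial $0$-layer (the identity layer with $W^0=I_n$, $\varphi^0=\mathrm{id}$), $H_0^0=0$, so $\hat{L}_{x_r}^0 = 0$, which makes the recursion well-posed; more usefully the layer-$1$ case $o^1=\varphi^1(x\cdot W^1+b^1)$ can be computed directly and seen to match the formula with the second ($\hat{L}_{x_r}^{K-1}\lVert W^K\rVert$) term absent.

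**The inductive step.** Assume the bound holds for the $(K-1)$-layer sub-network, i.e. $\sup_\Omega\lVert \tilde{H}_0^{K-1}\rVert \le \hat{L}_{x_r}^{K-1}$ and also that $\sup_\Omega\lVert J_0^{K-1}\rVert$ admits the product bound $\lVert W_{1j}^1\rVert\cdot\lVert W^2\rVert\cdots\lVert W^{K-1}\rVert$ coming from \eqref{eq:ap_jac_tensor} together with $\lVert J_l^l\rVert\le 1$ (true for the standard activations, whose derivative is bounded by $1$). Now apply \eqref{eq:ap_hess_tensor} with $l=K$:
\begin{equation*}
    H_0^K = (J_0^{K-1}W^K)\otimes_i (J_0^{K-1}W^K)\otimes_j H_K^K \; + \; H_0^{K-1}\otimes_k (W^K J_K^K).
\end{equation*}
Take norms and use the triangle inequality to split into the two summands. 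For the first term, a tensor–matrix product $T\otimes_a M$ obeys $\lVert T\otimes_a M\rVert \le \lVert T\rVert\,\lVert M\rVert$ in the relevant operator/Frobenius norm, so it is bounded by $\lVert J_0^{K-1}\rVert^2\,\lVert W^K\rVert^2\,\lVert H_K^K\rVert$; the factor $\lVert H_K^K\rVert$ is controlled by $a_k^K=\lVert H_K^K\rVert_\infty$, and one more $\lVert W^1\rVert$ appears from expanding $\lVert J_0^{K-1}\rVert^2 = \lVert W_{1j}^1\rVert\cdot\lVert W^1\rVert\cdot\lVert W^2\rVert^2\cdots\lVert W^{K-1}\rVert^2$ (the asymmetry between the $\lVert W_{1j}^1\rVert$ row-factor and the full $\lVert W^1\rVert$ matrix-factor is exactly the bookkeeping that must be tracked carefully). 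For the second term, $\lVert J_K^K\rVert\le 1$ gives $\lVert H_0^{K-1}\otimes_k(W^KJ_K^K)\rVert \le \lVert H_0^{K-1}\rVert\,\lVert W^K\rVert \le \hat{L}_{x_r}^{K-1}\lVert W^K\rVert$ after restricting to the $r$-slice. Adding the two bounds yields exactly $\hat{L}_{x_r}^K$.

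**Where the difficulty lies.** The conceptual content is light; the real work is in (i) pinning down precisely which norms are used on the tensor objects so that the submultiplicative inequalities $\lVert T\otimes_a M\rVert\le\lVert T\rVert\lVert M\rVert$ actually hold — in particular reconciling the mixed use of the $\infty$-norm for $H_K^K$ (via $a_k^K$) with the $L^2$ norms on the weight matrices, which is why the statement explicitly flags ``rest of the matrix norms are $L^2$ norms'' — and (ii) correctly propagating the fact that only the first row $W_{1j}^1$ of $W^1$ enters (because we differentiate w.r.t. a single output-to-input chain through $\tilde{H}$, not the full Jacobian), while every later layer contributes its full matrix norm, and moreover the $J_0^{K-1}$ appears squared in the first summand, producing the $\lVert W^\ell\rVert^2$ powers for $\ell\ge 2$ but only $\lVert W^1\rVert^1\cdot\lVert W_{1j}^1\rVert^1$ for the first layer. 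Getting those exponents to line up is the main obstacle; once the norm conventions are fixed the induction closes immediately. I would also remark at the end that the whole argument is insensitive to the output dimension, so it applies verbatim to vector-valued ANNs $g:\mathbb{R}^n\to\mathbb{R}^m$.
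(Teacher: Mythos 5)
Your plan follows essentially the same route as the paper's proof: induction on the number of layers via the recursive Hessian identity, a triangle-inequality split into the $H_K^K$ term and the $\tilde{H}_0^{K-1}$ term (with $\lVert J_l^l\rVert\le 1$ absorbed), and the same asymmetric bookkeeping that yields $\lVert W_{1j}^1\rVert\cdot\lVert W^1\rVert$ for the first layer and squared norms thereafter. The only detail you leave implicit --- and which the paper makes explicit --- is that $H_K^K$ is a diagonal $3$-tensor (each activation depends only on its own pre-activation), which is what turns the contraction $\left(\tilde{J}_0^{K-1}\cdot W^K\right)\otimes_i H_K^K$ into a diagonal matrix and justifies extracting the factor $\max_k\lvert a_k^K\rvert$ alongside the $L^2$ norms.
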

\begin{proof}
    Let us prove the theorem by induction over the number of layers of the NN.
    
    \textit{Base Case} $\left(n=1\right)$

    According to Eq. \eqref{eq:ap_hess_tensor}, the Hessian tensor $H_0^1$ can be written as 
    \(
        H_0^1 = \left(J_0^{0}\cdot W^1\right) \otimes_i \left(J_0^{0}\cdot W^1\right)  \otimes_j H_1^1 + H_0^{0}\otimes_k (W^1 \cdot J_1^1),
    \)
    with $H_0^1 \in \mathbb{R}^{n^0}\times\mathbb{R}^{n^0}\times \mathbb{R}^{n^1}$.
    
    However, finding an upper bound for the Lipschitz constant of the $ r^{th} $ partial derivative $ L_{x_r} $ does not require establishing a bound for the norm of $ H_0^1 $, but rather for
    \[\tilde{H}_0^1 = e_r \otimes_i H_0^1 = \left(\frac{\partial o^1_k}{\partial x_j \partial x_r}\right)_{j,k}, 1\leq j,k \leq n^l,\]
    where
    $e_r=(0,\ldots,0,\underset{\stackrel{\frown}{r}}{1},0,\ldots,0)$
    is the $r^{th}$ vector of the canonical base 
    \cite[page 50]{Gonzalo2023ExplainableNetworks}.
    Therefore, Eq. \eqref{eq:ap_hess_tensor} reduces to
    \begin{align*}
        \tilde{H}_0^1 &= \left(\left(e_r \otimes_i J_0^{0}\right)\cdot W^1\right) \otimes_i \left(J_0^{0}\cdot W^1\right)  \otimes_j H_1^1\\
        &\quad+ \left(e_r \otimes_i H_0^{0}\right)\otimes_k (W^1 \cdot J_1^1)\\
        &= \left(\tilde{J}_0^{0}\cdot W^1\right) \otimes_i \left(J_0^{0}\cdot W^1\right)  \otimes_j H_1^1 + \cancelto{0}{\tilde{H}_0^{0}}\otimes_k (W^1 \cdot J_1^1)\\
        &=\left(e_r\cdot W^1\right) \otimes_i \left(I_{n^l}\cdot W^1\right)  \otimes_j H_1^1\\
        &= \left(W^1_{1j} \otimes_i H_1^1\right)\cdot \left(W^1\right)^t.
    \end{align*}
    Therefore, 
    $ \|\tilde{H}_0^1\| \leq \underset{\text{I}}{\underbrace{\norm{W^1_{1j} \otimes_i H_1^1}}} \cdot \underset{\text{II}}{\underbrace{\norm{W^1}}},$
    where $W_{1j}^1 \in \mathbb{R}^{n^1}$ is the first row of the matrix of weights $W^1 \in \mathbb{R}^{n^0}\times \mathbb{R}^{n^1}$. On the other hand, $H_1^1 = \mathcal{D}^3(a_1^1,\dots,.a_{n^1}^1)$ is a diagonal 3D tensor, as the output of the $k^{th}$ neuron of the first layer $o^1_k = \varphi^1(z_k^1)$ only depends on $k^{th}$ input $z_k^1$. Hence
    
    
    \begin{align}\label{eq:ten_bound}
        &W^1_{1j} \otimes_i H_1^1 = (w_{11}^1,\dots,w_{1n^1}^{1})\otimes_i\mathcal{D}^3(a_1^1,\dots,.a_{n^1}^1) \nonumber \\
        &= \mathcal{D}^2(w_{11}^1\cdot a_1^1,\cdots,  w_{1n^1}^1\cdot a_{n^1}^1).
    \end{align}

    Hence, 
    $\text{I}=\norm{W^1_{1j} \otimes_i H_1^1} \leq \max_{k \in \{1,\dots, n^1\}}|a_k^1|\cdot \norm{W_{1j}^1}$
    and 
    \begin{align*}
        \|\tilde{H}_0^1\| &\leq \hat{L}^1_{x_r}:= \underset{\text{I}}{\underbrace{\norm{W^1_{1j} \otimes_i H_1^1}}} \cdot \underset{\text{II}}{\underbrace{\norm{W^1}}} \\
        & = \max_{k \in \{1,\dots, n^1\}}|a_k^1|\cdot \norm{W_{1j}^1} \cdot \norm{W^1}.
    \end{align*}
 
    \textit{Induction Step}. Suppose the case is true for $l-1$ layers and let us prove it for the $l^{th}$ layer.
    
    If the ANN has $l$ layers, then according to Eq. \eqref{eq:ap_hess_tensor},
    \begin{equation}\label{eq:ap_h_0_l}
        H_0^l = \left(J_0^{l-1}\cdot W^l\right) \otimes_i \left(J_0^{l-1}\cdot W^l\right)  \otimes_j H_l^l + H_0^{l-1}\otimes_k (W^l \cdot J_l^l).
    \end{equation}
    Therefore, to find $\hat{L}^l_{x_r}$ is it necessary to find an upper bound of the norm of  $\tilde{H}_0^l = e_r \otimes_i H_0^l$.
    Consequently, using Eq. \eqref{eq:ap_h_0_l}, 
    \begin{align*}
        &\tilde{H}_0^l = \left(\left(e_r \otimes_i J_0^{l-1}\right)\cdot W^l\right) \otimes_i \left(J_0^{l-1}\cdot W^l\right)  \otimes_j H_l^l \nonumber \\
        &\quad+ \left(e_r \otimes_i H_0^{l-1}\right)\otimes_k (W^l \cdot J_l^l) \nonumber \\
        &= \left(\tilde{J}_0^{l-1}\cdot W^l\right) \otimes_i \left(J_0^{l-1}\cdot W^l\right)  \otimes_j H_l^l + {\tilde{H}_0^{l-1}}\otimes_k (W^l \cdot J_l^l) \nonumber \\
        &= \underset{\text{I}}{\underbrace{\left(\left(\tilde{J}_0^{l-1}\cdot W^l\right) \otimes_i H_l^l \right) \cdot  \left(J_0^{l-1}\cdot W^l\right)^t}}  + \underset{\text{II}}{\underbrace{\left(\tilde{H}_0^{l-1} \cdot (W^l \cdot J_l^l)\right)}} .
    \end{align*}
    Thus, following the triangle inequality $\norm{\tilde{H}_0^l}= \norm{\text{I}+\text{II}}\leq \norm{\text{I}} + \norm{\text{II}}$. Hence, after finding an upper bound for I and II, it can be found an upper bound for $\norm{\tilde{H}_0^l}$.

    Supposed true the inductive hypothesis for $l-1$, then
    $\norm{\tilde{H}_0^{l-1}} \leq \tilde{L}_{x_r}^{l-1},$
    and therefore, as $\tilde{H}_0^{l-1} \in \mathbb{R}^{n^0} \times  \mathbb{R}^{n^{l-1}}$, then  
    \begin{multline*}
        \norm{\text{II}} = \norm{\tilde{H}_0^{l-1} \cdot (W^l \cdot J_l^l)} \leq \norm{\tilde{H}_0^{l-1}}\cdot \norm{W^l} \cdot \norm{J_l^l} \\
        \leq \tilde{L}_{x_r}^{l-1} \cdot \norm{W^l}.
    \end{multline*}
    Therefore, we have already found an upper bound for $\norm{\text{II}}$. On the other hand, $\norm{\text{I}}$ can be upper bounded by considering that 
    \(
        \norm{\text{I}} \leq \norm{\left(\left(\tilde{J}_0^{l-1}\cdot W^l\right) \otimes_i H_l^l \right)} \norm{ J_0^{l-1}}\cdot \norm{W^l}.
    \)

    First of all, according to Eq. \eqref{eq:ap_jac_tensor}, the Jacobian $J_0^l$ of the $l^{th}$ layer w.r.t. the input layer can be described as 
    \(
    J_0^l = J^0_0\cdot W^1 \cdot J^1_1 \cdot \, \dots \,\cdot W^{l}\cdot J_l^l,
    \)
    and therefore $ \norm{ J_0^{l}} \leq \prod_{1\leq i \leq l} \norm{W^l}$.

    Moreover, $\tilde{J}_0^{l-1}\cdot W^l \in \mathbb{R}^{n^l}$. Consequently, as $H_l^l$ is a 3D diagonal tensor such that $H_l^l = \mathcal{D}^3(a_1^1,\ldots,a_{n^2}^2)$, then by the same principle followed in  Eq. \eqref{eq:ten_bound}, 
    \(
        \norm{\left(\tilde{J}_0^{l-1}\cdot W^l\right) \otimes_i H_l^l} \leq \max_{k \in \{1,\dots, n^l\}}|a_k^l|\cdot \norm{\tilde{J}_0^{l-1}\cdot W^l},
    \)
    and then by Eq. \eqref{eq:ap_jac_tensor},
    \begin{align*}
        &\tilde{J}_0^{l-1}\cdot W^l = e_r\cdot J_0^0 \cdot W^1 \cdot J^1_1 \cdot \, \dots \, \cdot W^l\cdot J_l^l \implies \\
        &\norm{\tilde{J}_0^{l-1}\cdot W^l} \leq \norm{W^1_{1j}} \cdot \norm{W^2} \cdot \, \dots \, \cdot \norm{W^l}.
    \end{align*}
    Consequently,
    \begin{multline*}
        \norm{\text{I}} \leq \norm{\left(\left(\tilde{J}_0^{l-1}\cdot W^l\right) \otimes_i H_l^l \right)} \norm{ J_0^{l-1}}\cdot \norm{W^l} \\
        \leq \max_{k \in \{1,\dots, n^l\}}|a_k^l| \cdot \norm{W^1_{1j}} \cdot \norm{W^2} \cdot \, \dots \, \cdot \norm{W^l} (\prod_{1\leq i \leq l} \norm{W^l}) = \\
        \max_{k \in \{1,\dots, n^l\}}|a_k^l| \cdot \norm{W^1_{1j}} \cdot \norm{W^1}\cdot \norm{W^2}^2 \cdot \, \dots \, \cdot \norm{W^l}^2.
    \end{multline*}

    Consequently, as $\norm{\tilde{H}_0^2} \leq \norm{\text{I}}+\norm{\text{II}}$, then 
    
    \begin{multline*}
        L_{x_r}^K \leq \hat{L}_{x_r}^K := \max|a_k^K|\cdot\lVert W_{1j}^1\rVert \cdot\lVert W^1\rVert\cdot\lVert W^2\rVert ^2\cdot\, \dots \,\cdot\lVert W^K\rVert ^2\\ +\hat{L}_{x_r}^{K-1} 
        \cdot\lVert W^K \rVert.
    \end{multline*}

\end{proof}

\printbibliography

@article{cohen2021,
    title = {{Black-Box Model Risk in Finance}},
    year = {2021},
    journal = {SSRN Electronic Journal},
    author = {Cohen, Samuel N. and Snow, Derek and Szpruch, Lukasz},
    month = {2},
    publisher = {Elsevier BV},
    url = {https://papers.ssrn.com/abstract=3782412},
    doi = {10.2139/SSRN.3782412},
    arxivId = {2102.04757},
    keywords = {Black-Box Model Risk in Finance, Derek Snow, Lukasz Szpruch, Neural Networks, SSRN, Samuel N. Cohen, adversarial attacks, data cleaning, data-driven models, derivative pricing, expert design, hedging, market generators, model risk, quantitative finance, reinforcement learning, robustness, sensitivity, uncertainty}
}

@article{Liu2017AApplications,
    title = {{A survey of deep neural network architectures and their applications}},
    year = {2017},
    journal = {Neurocomputing},
    author = {Liu, Weibo and Wang, Zidong and Liu, Xiaohui and Zeng, Nianyin and Liu, Yurong and Alsaadi, Fuad E},
    month = {4},
    pages = {11--26},
    volume = {234},
    url = {https://www.sciencedirect.com/science/article/pii/S0925231216315533},
    doi = {10.1016/j.neucom.2016.12.038},
    issn = {0925-2312},
    keywords = {Autoencoder, Convolutional neural network, Deep belief network, Deep learning, Restricted Boltzmann machine}
}

@article{Tjoa2021AXAI,
    title = {{A Survey on Explainable Artificial Intelligence (XAI): Toward Medical XAI}},
    year = {2021},
    journal = {IEEE Transactions on Neural Networks and Learning Systems},
    author = {Tjoa, Erico and Guan, Cuntai},
    number = {11},
    month = {11},
    pages = {4793--4813},
    volume = {32},
    publisher = {Institute of Electrical and Electronics Engineers Inc.},
    doi = {10.1109/TNNLS.2020.3027314},
    issn = {21622388},
    pmid = {33079674},
    arxivId = {1907.07374},
    keywords = {Explainable artificial intelligence (XAI), interpretability, machine learning (ML), medical information system, survey}
}

@article{Zhang2020AInterpretability,
    title = {{A Survey on Neural Network Interpretability}},
    year = {2020},
    journal = {IEEE Transactions on Emerging Topics in Computational Intelligence},
    author = {Zhang, Yu and Tiňo, Peter and Leonardis, Aleš and Tang, Ke},
    number = {5},
    month = {12},
    pages = {726--742},
    volume = {5},
    publisher = {Institute of Electrical and Electronics Engineers Inc.},
    url = {http://arxiv.org/abs/2012.14261 http://dx.doi.org/10.1109/TETCI.2021.3100641},
    doi = {10.1109/TETCI.2021.3100641},
    arxivId = {2012.14261v3},
    keywords = {Machine learning, inter-pretability, neural networks, survey}
}

@article{Runchi2023AnEffects,
    title = {{An ensemble credit scoring model based on logistic regression with heterogeneous balancing and weighting effects}},
    year = {2023},
    journal = {Expert Systems with Applications},
    author = {Runchi, Zhang and Liguo, Xue and Qin, Wang},
    month = {2},
    pages = {118732},
    volume = {212},
    publisher = {Pergamon},
    doi = {10.1016/J.ESWA.2022.118732},
    issn = {0957-4174},
    keywords = {Dynamic weighting, Ensemble credit scoring models, Logistic regression, Logistic-BWE model, Sample balancing algorithm}
}

@article{Xu2020ApplicationIntelligence,
    title = {{Application of neural network algorithm in fault diagnosis of mechanical intelligence}},
    year = {2020},
    journal = {Mechanical Systems and Signal Processing},
    author = {Xu, Xianzhen and Cao, Dan and Zhou, Yu and Gao, Jun},
    month = {7},
    pages = {106625},
    volume = {141},
    publisher = {Academic Press},
    doi = {10.1016/J.YMSSP.2020.106625},
    issn = {0888-3270},
    keywords = {BP neural network, Fuzzy neural network, Intelligent fault detection, Neural network}
}

@article{Rawlings1998AppliedStatistics,
    title = {{Applied Regression Analysis : A Research Tool , Second Edition Springer Texts in Statistics}},
    year = {1998},
    author = {Rawlings, John O and Pantula, Sastry G and Dickey, David a},
    pages = {671},
    isbn = {0387984542}
}

@misc{Quinlan1993AutoMPG,
    title = {{Auto MPG}},
    year = {1993},
    author = {Quinlan, R},
    howpublished = {UCI Machine Learning Repository},
    doi = {10.24432/C5859H}
}

@article{Sohrab2014BasicEdition,
    title = {{Basic real analysis, Second edition}},
    year = {2014},
    journal = {Basic Real Analysis, Second Edition},
    author = {Sohrab, Houshang H.},
    month = {1},
    pages = {1--683},
    publisher = {Springer New York},
    isbn = {9781493918416},
    doi = {10.1007/978-1-4939-1841-6}
}

@misc{CapitalAuthority,
    title = {{Capital Requirements Regulation (CRR) | European Banking Authority}},
    url = {https://www.eba.europa.eu/regulation-and-policy/single-rulebook/interactive-single-rulebook/12674}
}

@article{Liu2020CertifiedNetworks,
    title = {{Certified Monotonic Neural Networks}},
    year = {2020},
    journal = {Advances in Neural Information Processing Systems},
    author = {Liu, Xingchao and Han, Xing and Zhang, Na and Liu, Qiang},
    pages = {15427--15438},
    volume = {33},
    doi = {10.48550/arXiv.2011.10219}
}

@article{Runje2023ConstrainedNetworks,
    title = {{Constrained Monotonic Neural Networks}},
    year = {2023},
    journal = {Proceedings of the 40th International Conference on Machine Learning},
    author = {Runje, Davor and Shankaranarayana, Sharath M},
    month = {7},
    pages = {29338--29353},
    volume = {202},
    publisher = {Proceedings of the 40th International Conference on Machine Learning},
    url = {https://proceedings.mlr.press/v202/runje23a.html},
    doi = {10.5555/3737916.3740623},
    issn = {2640-3498}
}

@article{Schneider2013ConvexTheory,
    title = {{Convex Bodies: The Brunn–Minkowski Theory}},
    year = {2013},
    journal = {Convex Bodies The Brunn-MinkowskiTheory},
    author = {Schneider, Rolf},
    month = {10},
    publisher = {Cambridge University Press},
    url = {https://www.cambridge.org/core/books/convex-bodies-the-brunnminkowski-theory/400F6173EE613859F144E9598DDD8BDF},
    isbn = {9781107601017},
    doi = {10.1017/CBO9781139003858}
}

@article{Sarvamangala2022ConvolutionalSurvey,
    title = {{Convolutional neural networks in medical image understanding: a survey}},
    year = {2022},
    journal = {Evolutionary Intelligence},
    author = {Sarvamangala, D. R. and Kulkarni, Raghavendra V.},
    number = {1},
    month = {3},
    pages = {1--22},
    volume = {15},
    publisher = {Springer Science and Business Media Deutschland GmbH},
    url = {https://link.springer.com/article/10.1007/s12065-020-00540-3},
    doi = {10.1007/S12065-020-00540-3},
    issn = {18645917},
    keywords = {Classification, Convolutional neural networks, Detection, Image understanding, Localization, Segmentation}
}

@article{Sivaraman2020Counterexample-GuidedNetworks,
    title = {{Counterexample-Guided Learning of Monotonic Neural Networks}},
    year = {2020},
    journal = {Advances in Neural Information Processing Systems},
    author = {Sivaraman, Aishwarya and Farnadi, Golnoosh and Millstein, Todd and van den Broeck, Guy},
    month = {6},
    volume = {2020-December},
    publisher = {Neural information processing systems foundation},
    url = {https://arxiv.org/abs/2006.08852v1},
    isbn = {2006.08852v1},
    doi = {10.48550/arXiv.2006.08852},
    issn = {10495258},
    arxivId = {2006.08852}
}

@article{Zhang2019Data-DrivenSurvey,
    title = {{Data-Driven Methods for Predictive Maintenance of Industrial Equipment: A Survey}},
    year = {2019},
    journal = {IEEE Systems Journal},
    author = {Zhang, Weiting and Yang, Dong and Wang, Hongchao},
    number = {3},
    month = {9},
    pages = {2213--2227},
    volume = {13},
    publisher = {Institute of Electrical and Electronics Engineers Inc.},
    doi = {10.1109/JSYST.2019.2905565},
    issn = {19379234},
    keywords = {Artificial intelligence (AI), deep learning (DL), fault diagnosis, machine learning (ML), predictive maintenance (PdM), remaining life assessment}
}

@article{You2017DeepFunctions,
    title = {{Deep Lattice Networks and Partial Monotonic Functions}},
    year = {2017},
    journal = {Advances in Neural Information Processing Systems},
    author = {You, Seungil and Ding, David and Canini, Kevin and Pfeifer, Jan and Gupta, Maya R.},
    month = {9},
    pages = {2982--2990},
    volume = {2017-December},
    publisher = {Neural information processing systems foundation},
    url = {https://arxiv.org/abs/1709.06680v1},
    doi = {10.48550/arXiv.1709.06680},
    issn = {10495258},
    arxivId = {1709.06680}
}

@article{Lecun2015DeepLearning,
    title = {{Deep learning}},
    year = {2015},
    journal = {Nature 2015 521:7553},
    author = {Lecun, Yann and Bengio, Yoshua and Hinton, Geoffrey},
    number = {7553},
    month = {5},
    pages = {436--444},
    volume = {521},
    publisher = {Nature Publishing Group},
    url = {https://www.nature.com/articles/nature14539},
    doi = {10.1038/nature14539},
    issn = {1476-4687},
    pmid = {26017442},
    keywords = {Computer science, Mathematics and computing}
}

@book{Goodfellow2016DeepLearning,
    title = {{Deep Learning}},
    year = {2016},
    author = {Goodfellow, Ian and Bengio, Yoshua and Courville, Aaron},
    publisher = {MIT Press},
    isbn = {978-0262035613}
}

@article{Voulodimos2018DeepReview,
    title = {{Deep Learning for Computer Vision: A Brief Review}},
    year = {2018},
    journal = {Computational Intelligence and Neuroscience},
    author = {Voulodimos, Athanasios and Doulamis, Nikolaos and Doulamis, Anastasios and Protopapadakis, Eftychios},
    volume = {2018},
    publisher = {Hindawi Limited},
    doi = {10.1155/2018/7068349},
    issn = {16875273},
    pmid = {29487619}
}

@article{Hinton2012DeepGroups,
    title = {{Deep neural networks for acoustic modeling in speech recognition: The shared views of four research groups}},
    year = {2012},
    journal = {IEEE Signal Processing Magazine},
    author = {Hinton, Geoffrey and Deng, Li and Yu, Dong and Dahl, George and Mohamed, Abdel Rahman and Jaitly, Navdeep and Senior, Andrew and Vanhoucke, Vincent and Nguyen, Patrick and Sainath, Tara and Kingsbury, Brian},
    number = {6},
    pages = {82--97},
    volume = {29},
    publisher = {Institute of Electrical and Electronics Engineers Inc.},
    doi = {10.1109/MSP.2012.2205597},
    issn = {10535888}
}

@article{Wang2020DeontologicalConstraints,
    title = {{Deontological Ethics By Monotonicity Shape Constraints}},
    year = {2020},
    journal = {Proceedings of the Twenty Third International Conference on Artificial Intelligence and Statistics},
    author = {Wang, Serena and Gupta, Maya},
    month = {6},
    pages = {2043--2054},
    volume = {108},
    publisher = {PMLR},
    url = {https://proceedings.mlr.press/v108/wang20e.html},
    issn = {2640-3498}
}

@article{Shi2022EfficientlyPropagation,
    title = {{Efficiently Computing Local Lipschitz Constants of Neural Networks via Bound Propagation}},
    year = {2022},
    journal = {Advances in Neural Information Processing Systems},
    author = {Shi, Zhouxing and Wang, Yihan and Zhang, Huan and Kolter, Zico and Hsieh, Cho-Jui},
    volume = {35}
}

@book{Gonzalo2023ExplainableNetworks,
    title = {{Explainable Artificial Intelligence (XAI) Techniques based on Partial Derivatives with Applications to Neural Networks}},
    year = {2023},
    author = {Gonzalo, Jaime Pizarroso and Mu{\~{n}}oz San Roque, Antonio and Portela Gonzalez, José},
    url = {https://repositorio.comillas.edu/xmlui/handle/11531/85986},
    keywords = {10.Reducci{\'{o}}n de las desigualdades, 12 Matem{\'{a}}ticas, 1203 Ciencias de los ordenadores, 120304 Inteligencia artificial, 1209 Estad{\'{i}}stica, 120903 An{\'{a}}lisis de datos, 9.Industria, innovaci{\'{o}}n e infraestructuras}
}

@article{Dosilovic2018ExplainableSurvey,
    title = {{Explainable artificial intelligence: A survey}},
    year = {2018},
    journal = {2018 41st International Convention on Information and Communication Technology, Electronics and Microelectronics, MIPRO 2018 - Proceedings},
    author = {Dosilovic, Filip Karlo and Brcic, Mario and Hlupic, Nikica},
    month = {6},
    pages = {210--215},
    publisher = {Institute of Electrical and Electronics Engineers Inc.},
    isbn = {9789532330977},
    doi = {10.23919/MIPRO.2018.8400040},
    keywords = {comprehensibility, explainability, explainable artificial intelligence, interpretability}
}

@article{Bussmann2021ExplainableManagement,
    title = {{Explainable Machine Learning in Credit Risk Management}},
    year = {2021},
    journal = {Computational Economics},
    author = {Bussmann, Niklas and Giudici, Paolo and Marinelli, Dimitri and Papenbrock, Jochen},
    number = {1},
    month = {1},
    pages = {203--216},
    volume = {57},
    publisher = {Springer},
    url = {https://link.springer.com/article/10.1007/s10614-020-10042-0},
    doi = {10.1007/s10614-020-10042-0},
    issn = {15729974},
    keywords = {Credit risk management, Explainable AI, Financial technologies, Similarity networks}
}

@article{Bofill2023ExploringOpportunities,
    title = {{Exploring Digital Twin-Based Fault Monitoring: Challenges and Opportunities}},
    year = {2023},
    journal = {Sensors 2023, Vol. 23, Page 7087},
    author = {Bofill, Jherson and Abisado, Mideth and Villaverde, Jocelyn and Sampedro, Gabriel Avelino},
    number = {16},
    month = {8},
    pages = {7087},
    volume = {23},
    publisher = {Multidisciplinary Digital Publishing Institute},
    url = {https://www.mdpi.com/1424-8220/23/16/7087/htm https://www.mdpi.com/1424-8220/23/16/7087},
    doi = {10.3390/S23167087},
    issn = {1424-8220},
    pmid = {37631622},
    keywords = {3D printing, machine learning, nozzle clogging, smart monitoring}
}

@article{Dwork2012FairnessAwareness,
    title = {{Fairness through awareness}},
    year = {2012},
    journal = {ITCS 2012 - Innovations in Theoretical Computer Science Conference},
    author = {Dwork, Cynthia and Hardt, Moritz and Pitassi, Toniann and Reingold, Omer and Zemel, Richard},
    pages = {214--226},
    url = {https://dl.acm.org/doi/10.1145/2090236.2090255},
    isbn = {9781450311151},
    doi = {10.1145/2090236.2090255},
    arxivId = {1104.3913}
}

@book{Federer1996GeometricTheory,
    title = {{Geometric Measure Theory}},
    year = {1996},
    author = {Federer, Herbert},
    editor = {Eckmann, B. and van der Waerden, B. L.},
    series = {Classics in Mathematics},
    publisher = {Springer Berlin Heidelberg},
    url = {http://link.springer.com/10.1007/978-3-642-62010-2},
    address = {Berlin, Heidelberg},
    isbn = {978-3-540-60656-7},
    doi = {10.1007/978-3-642-62010-2}
}

@misc{HeidaHomeHighVoronoi.jl,
    title = {{Home {\textperiodcentered} HighVoronoi.jl}},
    author = {Heida, Martin},
    url = {https://martinheida.github.io/HighVoronoi.jl/stable/}
}

@article{Gupta2019HowFlexibility,
    title = {{How to Incorporate Monotonicity in Deep Networks While Preserving Flexibility?}},
    year = {2019},
    author = {Gupta, Akhil and Shukla, Naman and Marla, Lavanya and Kolbeinsson, Arinbjörn and Yellepeddi, Kartik},
    month = {9},
    url = {https://arxiv.org/abs/1909.10662v3},
    isbn = {1909.10662v3},
    doi = {10.48550/arXiv.1909.10662},
    arxivId = {1909.10662}
}

@article{Dugas2000IncorporatingPricing,
    title = {{Incorporating Second-Order Functional Knowledge for Better Option Pricing}},
    year = {2000},
    journal = {Advances in Neural Information Processing Systems},
    author = {Dugas, Charles and Bengio, Yoshua and B{\'{e}}lisle, François and Nadeau, Claude and Garcia, René},
    volume = {13}
}

@article{Ben-David1989LearningConcepts,
    title = {{Learning and classification of monotonic ordinal concepts}},
    year = {1989},
    journal = {Computational Intelligence},
    author = {Ben-David, Arie and Sterling, Leon and Pao, Yoh-Han},
    number = {1},
    month = {1},
    pages = {45--49},
    volume = {5},
    publisher = {John Wiley {\&} Sons, Ltd},
    url = {https://onlinelibrary.wiley.com/doi/full/10.1111/j.1467-8640.1989.tb00314.x https://onlinelibrary.wiley.com/doi/abs/10.1111/j.1467-8640.1989.tb00314.x https://onlinelibrary.wiley.com/doi/10.1111/j.1467-8640.1989.tb00314.x},
    doi = {10.1111/J.1467-8640.1989.TB00314.X},
    issn = {1467-8640},
    keywords = {apprentissage, classification prudente, coh{\'{e}}rence, conservative classification, consistency, don't cares, don't knows, incertitude, indiff{\'{e}}rence, learning, monotonicity, monotonicit{\'{e}}, ordinal reasoning, raisonnement ordinal}
}

@article{Virmaux2018LipschitzEstimation,
    title = {{Lipschitz regularity of deep neural networks: analysis and efficient estimation}},
    year = {2018},
    journal = {Advances in Neural Information Processing Systems},
    author = {Virmaux, Aladin and Scaman, Kevin},
    volume = {31},
    url = {https://github.com/avirmaux/lipEstimation.}
}

@article{Dumitrescu2022MachineEffects,
    title = {{Machine learning for credit scoring: Improving logistic regression with non-linear decision-tree effects}},
    year = {2022},
    journal = {European Journal of Operational Research},
    author = {Dumitrescu, Elena and Hu{\'{e}}, Sullivan and Hurlin, Christophe and Tokpavi, Sessi},
    number = {3},
    month = {3},
    pages = {1178--1192},
    volume = {297},
    publisher = {North-Holland},
    doi = {10.1016/J.EJOR.2021.06.053},
    issn = {0377-2217},
    keywords = {Credit scoring, Econometrics, Interpretability, Machine learning, Risk management}
}

@misc{EuropeanBankingAuthority2023MachineModels,
    title = {{Machine Learning for IRB Models}},
    year = {2023},
    author = {{European Banking Authority}},
    month = {9},
    url = {https://www.eba.europa.eu/publications-and-media/press-releases/eba-publishes-follow-report-use-machine-learning-internal}
}

@article{Daniels2010MonotoneNetworks,
    title = {{Monotone and partially monotone neural networks}},
    year = {2010},
    journal = {IEEE transactions on neural networks},
    author = {Daniels, Hennie and Velikova, Marina},
    number = {6},
    month = {6},
    pages = {906--917},
    volume = {21},
    publisher = {IEEE Trans Neural Netw},
    url = {https://pubmed.ncbi.nlm.nih.gov/20371402/},
    doi = {10.1109/TNN.2010.2044803},
    issn = {1941-0093},
    pmid = {20371402},
    keywords = {Algorithms*, Computer Simulation, Computer*, Hennie Daniels, Humans, MEDLINE, Marina Velikova, NCBI, NIH, NLM, National Center for Biotechnology Information, National Institutes of Health, National Library of Medicine, Neural Networks, Predictive Value of Tests, PubMed Abstract, doi:10.1109/TNN.2010.2044803, pmid:20371402}
}

@article{Cano2019MonotonicSets,
    title = {{Monotonic classification: An overview on algorithms, performance measures and data sets}},
    year = {2019},
    journal = {Neurocomputing},
    author = {Cano, José Ramón and Guti{\'{e}}rrez, Pedro Antonio and Krawczyk, Bartosz and Wo{\'{z}}niak, Michał and Garc{\'{i}}a, Salvador},
    month = {5},
    pages = {168--182},
    volume = {341},
    publisher = {Elsevier},
    doi = {10.1016/J.NEUCOM.2019.02.024},
    issn = {0925-2312},
    arxivId = {1811.07155},
    keywords = {Monotonic classification, Monotonic data sets, Ordinal classification, Performance metrics, Software, Taxonomy}
}

@article{Chen2023MonotonicityFinance,
    title = {{Monotonicity for AI ethics and society: An empirical study of the monotonic neural additive model in criminology, education, health care, and finance}},
    year = {2023},
    author = {Chen, Dangxing and Zhang, Luyao},
    month = {1},
    url = {https://arxiv.org/abs/2301.07060v1},
    arxivId = {2301.07060},
    keywords = {AI ethics, criminology, education, explainable AI, fairness, finance, health care, human-computer interactions, interpretability, monotonicity}
}

@article{Monteiro2022MonotonicityClassification,
    title = {{Monotonicity regularization: Improved penalties and novel applications to disentangled representation learning and robust classification}},
    year = {2022},
    author = {Monteiro, João and Ahmed, Mohamed Osama and Hajimirsadeghi, Hoseein and Mori, Greg},
    month = {8},
    pages = {1381--1391},
    publisher = {PMLR},
    url = {https://proceedings.mlr.press/v180/monteiro22a.html},
    doi = {10.48550/arXiv.2205.08247},
    issn = {2640-3498}
}

@article{Pizarroso2022NeuralSens:Networks,
    title = {{NeuralSens: Sensitivity Analysis of Neural Networks}},
    year = {2022},
    journal = {Journal of Statistical Software},
    author = {Pizarroso, Jaime and Portela, José and Mu{\~{n}}oz, Antonio},
    number = {7},
    month = {4},
    pages = {1--36},
    volume = {102},
    publisher = {American Statistical Association},
    url = {https://www.jstatsoft.org/index.php/jss/article/view/v102i07},
    doi = {10.18637/JSS.V102.I07},
    issn = {1548-7660},
    arxivId = {2002.11423},
    keywords = {R, analysis, neural networks, neuralsens, sensitivity, variable importance}
}

@article{Morala2023NN2Poly:Networks,
    title = {{NN2Poly: A Polynomial Representation for Deep Feed-Forward Artificial Neural Networks}},
    year = {2023},
    journal = {IEEE Transactions on Neural Networks and Learning Systems},
    author = {Morala, Pablo and Cifuentes, Jenny Alexandra and Lillo, Rosa E. and Ucar, Inaki},
    publisher = {Institute of Electrical and Electronics Engineers Inc.},
    doi = {10.1109/TNNLS.2023.3330328},
    issn = {21622388},
    arxivId = {2112.11397},
    keywords = {Artificial neural networks, Biological neural networks, Deep learning, Interpretability, Multilayer perceptrons, Neurons, Proposals, Tensors, machine learning, multilayer perceptron (MLP), multiset partitions, neural networks (NNs), polynomial representation}
}

@article{Heida2023OnDiagrams,
    title = {{On the computation of high dimensional Voronoi diagrams}},
    year = {2023},
    author = {Heida, Martin},
    url = {http://www.wias-berlin.de/preprint/3041/wias_preprints_3041.pdf},
    doi = {10.20347/WIAS.PREPRINT.3041},
    issn = {2198-5855},
    keywords = {HighVoronoi, and phrases Raycast, geometry, mesh generation}
}

@article{Raissi2019Physics-informedEquations,
    title = {{Physics-informed neural networks: A deep learning framework for solving forward and inverse problems involving nonlinear partial differential equations}},
    year = {2019},
    journal = {Journal of Computational Physics},
    author = {Raissi, M. and Perdikaris, P. and Karniadakis, G. E.},
    month = {2},
    pages = {686--707},
    volume = {378},
    publisher = {Academic Press},
    doi = {10.1016/J.JCP.2018.10.045},
    issn = {0021-9991},
    keywords = {Data-driven scientific computing, Machine learning, Nonlinear dynamics, Predictive modeling, Runge–Kutta methods}
}

@book{Rudin1976PrinciplesMathematics,
    title = {{Principles of Mathematical Analysis (International Series in Pure and Applied Mathematics)}},
    year = {1976},
    author = {Rudin, Walter},
    month = {1},
    pages = {351},
    isbn = {007054235X}
}

@article{Paszke2019PyTorch:Library,
    title = {{PyTorch: An Imperative Style, High-Performance Deep Learning Library}},
    year = {2019},
    journal = {Advances in Neural Information Processing Systems},
    author = {Paszke, Adam and Gross, Sam and Massa, Francisco and Lerer, Adam and Bradbury, James and Chanan, Gregory and Killeen, Trevor and Lin, Zeming and Gimelshein, Natalia and Antiga, Luca and Desmaison, Alban and K{\"{o}}pf, Andreas and Yang, Edward and DeVito, Zach and Raison, Martin and Tejani, Alykhan and Chilamkurthy, Sasank and Steiner, Benoit and Fang, Lu and Bai, Junjie and Chintala, Soumith},
    month = {12},
    volume = {32},
    publisher = {Neural information processing systems foundation},
    url = {https://arxiv.org/abs/1912.01703v1},
    issn = {10495258},
    arxivId = {1912.01703}
}

@article{Wen2022RecentPerspective,
    title = {{Recent advances and trends of predictive maintenance from data-driven machine prognostics perspective}},
    year = {2022},
    journal = {Measurement},
    author = {Wen, Yuxin and Fashiar Rahman, Md and Xu, Honglun and Tseng, Tzu Liang Bill},
    month = {1},
    pages = {110276},
    volume = {187},
    publisher = {Elsevier},
    doi = {10.1016/J.MEASUREMENT.2021.110276},
    issn = {0263-2241},
    keywords = {Condition-based maintenance, Machine learning, Machine prognostics, Predictive maintenance, Prognostics and health management, Remaining useful life}
}

@book{Sutton2018ReinforcementEd.,
    title = {{Reinforcement Learning: An Introdcution (2nd ed.)}},
    year = {2018},
    author = {Sutton, Richard S and Barto, Andrew G},
    pages = {427},
    publisher = {MIT Press},
    isbn = {978-0262039246}
}

@incollection{Kaur2021RequirementsReview,
    title = {{Requirements for Trustworthy Artificial Intelligence – A Review}},
    year = {2021},
    booktitle = {Advances in Intelligent Systems and Computing},
    author = {Kaur, Davinder and Uslu, Suleyman and Durresi, Arjan},
    pages = {105--115},
    volume = {1264 AISC},
    publisher = {Springer, Cham},
    url = {https://link.springer.com/chapter/10.1007/978-3-030-57811-4_11},
    isbn = {978-3-030-57811-4},
    doi = {10.1007/978-3-030-57811-4{\_}11},
    issn = {2194-5365}
}

@article{Kitouni2021RobustNetworks,
    title = {{Robust and Provably Monotonic Networks}},
    year = {2021},
    journal = {Machine Learning: Science and Technology},
    author = {Kitouni, Ouail and Nolte, Niklas and Williams, Mike},
    number = {3},
    month = {11},
    volume = {4},
    publisher = {Institute of Physics},
    url = {http://arxiv.org/abs/2112.00038 http://dx.doi.org/10.1088/2632-2153/aced80},
    doi = {10.1088/2632-2153/aced80},
    arxivId = {2112.00038v2},
    keywords = {monotonic, real-time processing, robust}
}

@article{Mikulincer2024SizeApproximation,
    title = {{Size and Depth of Monotone Neural Networks: Interpolation and Approximation}},
    year = {2024},
    journal = {IEEE Transactions on Neural Networks and Learning Systems},
    author = {Mikulincer, Dan and Reichman, Daniel},
    publisher = {Institute of Electrical and Electronics Engineers Inc.},
    doi = {10.1109/TNNLS.2024.3387878},
    issn = {21622388},
    arxivId = {2207.05275},
    keywords = {Approximation, Biological neural networks, Complexity theory, Convex functions, Interpolation, Logic gates, Neurons, Task analysis, circuit complexity, interpolation, lower bounds, monotone neural networks}
}

@article{Okabe2000SpatialKendall,
    title = {{Spatial tessellations: concepts and applications of Voronoi diagrams, with a foreword by DG Kendall}},
    year = {2000},
    journal = {Wiley Series in Probability and Statistics},
    author = {Okabe, Atsuyuki and Boots, Barry and Sugihara, Kokichi and Chiu, Sung Nok},
    pages = {696},
    url = {https://www.wiley.com/en-us/Spatial+Tessellations%3A+Concepts+and+Applications+of+Voronoi+Diagrams%2C+2nd+Edition-p-9780471986355},
    isbn = {978-0-471-98635-5}
}

@article{Rudin2019StopInstead,
    title = {{Stop explaining black box machine learning models for high stakes decisions and use interpretable models instead}},
    year = {2019},
    journal = {Nature Machine Intelligence 2019 1:5},
    author = {Rudin, Cynthia},
    number = {5},
    month = {5},
    pages = {206--215},
    volume = {1},
    publisher = {Nature Publishing Group},
    url = {https://www.nature.com/articles/s42256-019-0048-x},
    doi = {10.1038/s42256-019-0048-x},
    issn = {2522-5839},
    arxivId = {1811.10154},
    keywords = {Computer science, Criminology, Science, Statistics, technology and society}
}

@phdthesis{Xie2023TestingModelling,
    title = {{Testing for Monotonicity in Credit Risk Modelling}},
    year = {2023},
    author = {Xie, Zhangqian},
    url = {/items/13654a9d-7ed1-471e-9800-7fbafbaa2cf3},
    school = {University of Alberta},
    doi = {10.7939/R3-347N-PR74}
}

@article{Sharma2020TestingModels,
    title = {{Testing Monotonicity of Machine Learning Models}},
    year = {2020},
    author = {Sharma, Arnab and Wehrheim, Heike},
    month = {2},
    url = {https://arxiv.org/abs/2002.12278v1},
    arxivId = {2002.12278},
    keywords = {Decision Tree, Machine Learning Testing, Monotonicity}
}

@article{Sivaprasad2021TheNetworks,
    title = {{The Curious Case of Convex Neural Networks}},
    year = {2021},
    journal = {Lecture Notes in Computer Science (including subseries Lecture Notes in Artificial Intelligence and Lecture Notes in Bioinformatics)},
    author = {Sivaprasad, Sarath and Singh, Ankur and Manwani, Naresh and Gandhi, Vineet},
    pages = {738--754},
    volume = {12975 LNAI},
    publisher = {Springer Science and Business Media Deutschland GmbH},
    url = {https://link.springer.com/chapter/10.1007/978-3-030-86486-6_45},
    isbn = {9783030864859},
    doi = {10.1007/978-3-030-86486-6{\_}45},
    issn = {16113349},
    arxivId = {2006.05103}
}

@article{Pauli2022TrainingBounds,
    title = {{Training Robust Neural Networks Using Lipschitz Bounds}},
    year = {2022},
    journal = {IEEE Control Systems Letters},
    author = {Pauli, Patricia and Koch, Anne and Berberich, Julian and Kohler, Paul and Allgower, Frank},
    pages = {121--126},
    volume = {6},
    publisher = {Institute of Electrical and Electronics Engineers Inc.},
    doi = {10.1109/LCSYS.2021.3050444},
    issn = {24751456},
    arxivId = {2005.02929},
    keywords = {Linear matrix inequalities, neural networks, robustness}
}

@article{Thiebes2021TrustworthyIntelligence,
    title = {{Trustworthy artificial intelligence}},
    year = {2021},
    journal = {Electronic Markets},
    author = {Thiebes, Scott and Lins, Sebastian and Sunyaev, Ali},
    number = {2},
    month = {6},
    pages = {447--464},
    volume = {31},
    publisher = {Springer Science and Business Media Deutschland GmbH},
    url = {https://link.springer.com/article/10.1007/s12525-020-00441-4},
    doi = {10.1007/S12525-020-00441-4},
    issn = {14228890},
    keywords = {Artificial intelligence, Blockchain, Distributed ledger technology, Framework, Trust, Trustworthy artificial intelligence}
}

\end{document}